\newcommand{\bs}{\boldsymbol{s}}
\newcommand{\bg}{\boldsymbol{g}}
\newcommand{\bm}{\boldsymbol{m}}
\newcommand{\bc}{\boldsymbol{c}}
\newcommand{\bW}{\boldsymbol{\Omega}}
\newcommand{\bX}{\boldsymbol{X}}
\newcommand{\bZ}{\boldsymbol{Z}}
\newcommand{\bx}{\boldsymbol{x}}
\newcommand{\by}{\boldsymbol{y}}
\newcommand{\bz}{\boldsymbol{z}}
\newcommand{\bphi}{\boldsymbol{\phi}}
\newcommand{\bmu}{\boldsymbol{\mu}}
\newcommand{\bsigma}{\boldsymbol{\sigma}}
\newcommand{\bzeta}{\boldsymbol{\zeta}}
\newcommand{\sumK}{\sum_{k=1}^K}
\newcommand{\prodK}{\prod_{k=1}^K}
\newcommand{\sumN}{\sum_{j=1}^{n_s}}
\newcommand{\btheta}{\boldsymbol{\theta}}
\newcommand{\bgamma}{\boldsymbol{\gamma}}
\newcommand{\RH}[1]{}
\newcommand{\stkout}[1]{\ifmmode\text{\sout{\ensuremath{#1}}}\else\sout{#1}\fi}
\DeclareMathOperator*{\argmin}{arg\,min}
\newtheorem{theorem}{Theorem}[section]
\newtheorem{lemma}[theorem]{Lemma}
\newtheorem{remark}{Remark}
\renewcommand*\env@matrix[1][\arraystretch]{%
  \edef\arraystretch{#1}%
  \hskip -\arraycolsep
  \let\@ifnextchar\new@ifnextchar
  \array{*\c@MaxMatrixCols c}}
\begin{document}
\title{\bf Modeling Spatio-temporal Extremes via Conditional Variational Autoencoders}
\date{}

\author{Xiaoyu Ma, Likun Zhang, Christopher K. Wikle
}

\affil{Department of Statistics, University of Missouri, Columbia, Missouri 65211, USA}
\maketitle

\begin{abstract}
Extreme weather events are widely studied in fields such as agriculture, ecology, and meteorology. The spatio-temporal co-occurrence of extreme events can strengthen or weaken under changing climate conditions. 
In this paper, we propose a novel approach to model spatio-temporal extremes by integrating climate indices via a conditional variational autoencoder (cXVAE).  A convolutional neural network (CNN) is embedded in the decoder to convolve climatological indices with the spatial dependence within the latent space, thereby allowing the decoder to be dependent on the climate variables. 
There are three main contributions here. First, we demonstrate through extensive simulations that the proposed conditional XVAE accurately emulates spatial fields and recovers spatially and temporally varying extremal dependence with very low computational cost post training. Second, we provide a simple, scalable approach to detecting condition-driven shifts and whether the dependence structure is invariant to the conditioning variable. Third, when dependence is found to be condition-sensitive, the conditional XVAE supports counterfactual experiments allowing intervention on the climate covariate and propagating the associated change through the learned decoder to quantify differences in joint tail risk, co-occurrence ranges, and return metrics. To demonstrate the practical utility and performance of the model in real-world scenarios, we apply our method to analyze the monthly maximum Fire Weather Index (FWI) over eastern Australia from 2014 to 2024 conditioned on the El Ni\~{n}o/Southern Oscillation (ENSO) index. 

\end{abstract}

\noindent%
{\it Keywords:}  
Extreme value theory,
Spatio-temporal statistics,
Conditional variational autoencoders,
Neural Networks
\vfill

\section{Introduction}\label{sec:intro}

Extreme weather events, including tornadoes, floods, thunderstorms, and heatwaves, not only cause severe environmental damage \citep{simmons2008tornado, dotzek2009overview}, but also result in loss of life, economic disruption, and displacement of communities. Therefore, it is crucial to characterize the spatial extent and temporal duration of the co-occurrence of these extreme events, which requires an accurate understanding of the extremal dependence structure over the desired region. 

To model spatial extremes, asymptotic extreme-value models such as max-stable processes \citep{davison2015statistics,davison2012statistical} or Pareto processes \citep{ferreira2014generalized, thibaud2015efficient,de2018high} have been proven to be powerful tools. These models characterize the limiting laws of multivariate/spatial extremes in the form of either renormalized
pointwise maxima or exceedances over high thresholds of spatial stochastic processes, and their theoretical properties hold in the asymptotic regime under appropriate domain-of-attraction conditions. However, this regime is often unrealistic for spatial datasets observed at finite thresholds \citep[see][for a systematic review]{huser2022advances}.

To address this limitation, sub-asymptotic models have become increasingly popular. These models are specifically designed to describe the joint tail behavior at high but finite levels. These include max-infinitely divisible (max-id) models for spatial block maxima \citep[e.g.,][]{padoan2013extreme, huser2021max, bopp2021hierarchical,zhong2022modeling}, certain types of random scale mixture models for peaks-over-threshold data \citep[e.g.,][]{Opitz2016,huser2017bridging, Huser2019}, and the spatial conditional extremes framework \citep[e.g.,][]{wadsworth2022higher,vandeskog2024efficient}. One of the key advantages of sub-asymptotic models is their ability to represent both asymptotic dependence (AD) and asymptotic independence (AI), two regimes that describe how extremes co-occur. In the AD case, the extreme events are more likely to happen jointly across locations. While in the AI case, the probability of simultaneous extremes goes to zero as the quantile threshold close to one (see formal definition in Section \ref{subsec:metric}). It's crucial for sub-asymptotic models to allow AD, AI, or both at the same time, since environmental process often exhibit AD at nearby sites and AI when locations are far apart.



Most of the existing frameworks described above still assume the underlying extremal dependence structure remains fixed over time. For example, the practical use of max-stable processes and random scale mixture models is largely restricted to purely spatial settings with independent temporal replicates. Although temporal effects are often introduced in the margins via covariates \citep[e.g.,][]{majumder2024modeling, zhang2024leveraging}, there remains a need for models that allow the process-level spatial dependence itself to evolve with changing environmental conditions. On the other hand, time series research has produced detailed specifications of extremal dependence properties for temporally indexed extremes \citep{ledford2003diagnostics,chavez2012modelling, zhang2021studying}, yet these developments have limited overlap with spatial modeling. Recent work has begun to address extremal dependence in spatio-temporal settings---for example, random scale mixture model with time-indexed radial and angular variables \citep{dell2025flexible}, hierarchical models with dependence stemming from overlap of ``slanted elliptical cylinders'' in space-time \citep{bacro2020hierarchical}, and dynamic spatio-temporal models with latent regime-switching structures \citep{yoo2025modeling}. Nevertheless, these approaches either are not necessarily realistic in how the extremal dependence evolves over time or they face severe computational challenges in high-dimensional domains. Therefore, fully flexible frameworks that allow extremal dependence parameters to vary across space and time with massive number of locations remain underdeveloped, motivating the need for new approaches.


Extending the sub-asymptotic models to large-scale spatio-temporal domains raises several major challenges, especially when we intend to allow nonstationary extremal dependence in space-time. First, high-dimensional likelihoods quickly become intractable. Indeed, when the dimension extends across both space and time, the full likelihood is generally unavailable. This also leads to the second difficulty of computational complexity. Composite likelihood methods \citep[e.g.,][]{padoan2010likelihood,castruccio2016high}, though feasible for moderate dimensions in principle, are still computationally demanding and compromise on statistical efficiency relative to the full likelihood. Additionally, sub-asymptotic models often rely on Gaussian copulas, which require costly inversion of large covariance matrices to estimate dependence parameters. Incorporating the temporal dimension substantially amplifies this computational burden. Finally, the availability of replicates in spatio-temporal datasets is extremely limited (unless we work with climate reanalysis ensembles). Typically, only a single realization is observed at each location and time, making it difficult to identify and track changes in the dependence structure. This limitation leads to high-variance estimates and can result in biased risk assessments, especially for extreme quantiles. In view of these challenges, we turn to deep learning techniques to address issues such as intractable likelihoods and excessive computational cost.

Over the past decade, deep learning has been increasingly adopted for spatial extremes and extremal-dependence modeling. These models enable researchers to better understand and predict extreme events, such as wildfires \citep{richards2024extreme, ribeiro2023reconstructing}, heavy precipitation \citep{bi2023nowcasting}, and extreme streamflow \citep{majumder2024modeling}. For \textit{spatial }extremes, \cite{boulaguiem2022modeling} applied Generative Adversarial Networks \citep[GANs;][]{goodfellow2020generative} at the copula level to learn the underlying extremal dependence, and there are no required parametric assumptions on the dependence structure. Different from the competing framework of adversarial training, Variational Autoencoders (VAEs)\citep{kingma2013auto} employ the encoder-decoder structure to reconstruct the input. \cite{zhang2023flexible} used VAE models to capture the spatial dynamics of the extremal dependence parameters within the latent space, but they do not explicitly model temporal changes in extremal dependence structure. To the best of our knowledge, there are no existing generative models that efficiently characterize the evolution of extremal dependence structures across space and time, along with the exploration of counterfactual (or storyline) experiments related to climate conditions. 

 

In this work, we develop a novel conditional VAE that integrates climate variables as conditions to model extremes and associated extremal dependence within the spatio-temporal regime, referred as conditional XVAE or cXVAE. While related to the XVAE approaches for spatial extremes \citep{zhang2023flexible}, our model is distinguished by its ability to incorporate climate drivers and capture time-varying extremal dependence. Under varying climate scenarios, the parameters that govern extremal dependence and drive extreme‑event emulations are allowed to evolve over time, thus removing the restrictive assumption of stationarity. Additionally, our model can evaluate the influence of large-scale climate conditions by comparing the reconstruction performance with and without these conditions included. This comparison provides a way to check how much the climate drivers contribute to explaining the extremes. We assess the capability of the proposed approach by generating new instances that faithfully preserve the underlying extremal‑dependence structure and estimating dependence parameters corresponding to different climate states. Computationally, the method scales to high-resolution satellite fields and other large inputs, with training and evaluation feasible on a standard laptop, thereby making a practical and accessible framework without reliance on advanced hardware.

Conducting counterfactual experiments is another benefit of our model. Such experiments directly address policy‑relevant questions. For example, if a large‑scale condition or forcing had been changed (by subtracting or adding a physically consistent perturbation) while others stayed the same, how would an extreme weather event have unfolded? These experiments are often called storyline or hindcast‑attribution experiments and have been widely employed in studies of hurricanes, compound flood‑heat events, and heatwaves, translating ``human influence'' (or any prescribed forcing)  into concrete numbers that practitioners can act on \citep[e.g.,][]{reed2020forecasted, bercos2022anthropogenic, wang2023storyline}. However, these studies are usually done through a locally calibrated climate model to pair ``factual vs. counterfactual''  simulations, which can be computationally expensive and cost hundreds of core‑hours per ensemble member. By contrast, our approach provides a powerful framework for conducting counterfactual experiments, while substantially reducing computational cost. 

The remainder of this paper proceeds as follows. Section \ref{sec:background} reviews the background of cVAE \citep{sohn2015learning} and XVAE \citep{zhang2023flexible}. Section \ref{sec:methods} details our proposed Conditional XVAE. Section \ref{sec:simulation} describes the simulation study, including emulation results and related inference. In Section \ref{sec:real data}, the proposed model will be applied to the monthly maxima of Fire Weather Index (FWI) in the eastern Australia conditioned on the El Ni\~{n}o/Southern Oscillation (ENSO) index. Finally, Section \ref{sec:discussion} concludes with a discussion of limitations and directions for future research.

\section{Background}\label{sec:background}


\subsection{Conditional VAE}\label{subsec:CVAE}

In \cite{kingma2013auto}, VAEs are designed to approximate an intractable posterior distribution and perform marginal inference through \emph{amortized}  learning. The basic structure involves encoding the incoming data into a latent distribution and then decoding the processed latent variables back to the input space to accomplish the reconstruction task.

Say we have some data $\bX = \{ \bx_t\}, t= 1, \ldots, n_t$ that are independent samples from a random vector $\bx\in \mathbb{R}^{n_s}$ and introduce per-observation latent random variables $\bz_t\in \mathbb{R}^K$. 
The VAE model introduces a recognition model (or encoder) $q_{\bphi_e}(\bz\mid\bx)$ to serve as an approximation of the true posterior $p_{\btheta}(\bz\mid\bx)$, in which $\bphi_e$ are the weights and biases in the encoder neural network and $\btheta$ consists of model parameters for the data model $p_{\btheta}(\bx\mid\bz)$ and prior model $p_{\btheta}(\bz)$. In practice, the probabilistic encoder $q_{\bphi_e}(\bz\mid\bx)$ is implemented by a multi-layer perceptron (MLP) neural network that maps $\bx$ to the parameters of a tractable variational family (e.g., a diagonal Gaussian) via the reparameterization trick:
\begin{equation}\label{eqn:encoder_form}
 \begin{split}
 \bz=\bmu + \bsigma \odot \boldsymbol{\epsilon}, \quad\boldsymbol{\epsilon}\sim \mathrm{MVN}(\boldsymbol{0}, \boldsymbol{I}),\\
    (\bmu^{\rm T},\log \bsigma^{\rm T})^{\rm T} = \mathrm{EncoderNeuralNet}_{\bphi_e}(\bx),
 \end{split}
\end{equation}
where $\odot$ is the elementwise product. After the $\bZ$ samples are drawn from the variational distribution, it is passed to the decoding structure $p_{\bphi_d}(\bx\mid\bz)$, which is referred to as a decoder, and $\bphi_d$ are the weights and biases in the decoder network (generative model). The recognition model parameters $\bphi_e$ and the generative model parameters $\bphi_d$ are both learnable.

The VAEs are trained via the optimization of the evidence lower bound (ELBO). For a single datum $\bx$, the ELBO is defined as the difference between marginal likelihoods and KL divergence of recognition model from the true posterior:
\begin{equation}\label{eqn:VAE_ELBO_1}
    \mathcal{L}_{\bphi_e, \bphi_d}(\bx) = \log p_{\bphi_d}(\bx)- D_{KL}\left( q_{\bphi_e}(\bz\mid\bx)\;\|\;p_{\bphi_d}(\bz\mid \bx)\right).
\end{equation}
The combined objective for the entire dataset 
$\sum_{i=1}^N \mathcal{L}_{\bphi_e, \bphi_d}(\bx_i)$ is then typically maximized by stochastic gradient methods with mini-batches. The maximization of the ELBO is equivalent to maximizing the marginal likelihoods while minimizing the KL discrepancy between the approximated posterior and the true posterior. To facilitate computation, the ELBO can be written equivalently as
\begin{equation}\label{eqn:VAE_ELBO_2}
     \mathcal{L}_{\bphi_e, \bphi_d}(\bx) = \mathbb{E}_{q_{\bphi_e}(\bz\mid\bx)}\left[ \log p_{\bphi_d}(\bx\mid\bz) \right] - D_{KL}\left( q_{\bphi_e}(\bz\mid\bx)\;\|\;p_{\bphi_d}(\bz)\right).
\end{equation}
Here, each expectation can be approximated using Monte Carlo:
\begin{equation}\label{eqn:VAE_ELBO_3}
    \begin{split}
        \hat{\mathcal{L}}_{\bphi_e,\bphi_d}(\boldsymbol{x})
&= \frac{1}{L} \sum_{l=1}^L \left\{\log p_{\bphi_d}(\bx\mid\bZ^{l}) -\log q_{\bphi_e}(\bZ^{l}\mid\bx)+ \log p_{\bphi_d}(\bZ^{l})\right\},
    \end{split}
\end{equation}
where $\bZ^1,\ldots,\bZ^L$ are independent draws from the encoder following~\eqref{eqn:encoder_form}. The reparameterization trick is crucial here for enabling fast computation of the gradient of $\sum_{t=1}^{n_t} \hat{\mathcal{L}}_{\bphi_e, \bphi_d}(\bx_t)$ with respect to both $\bphi_e$ and $\bphi_d$.

Extending the VAE, \cite{sohn2015learning} developed a deep conditional generative model called a Conditional Variational Autoencoder (cVAE). This model is capable of learning the conditional distribution $p_{\bphi_d}(\bx\mid\boldsymbol{c})$, allowing for the generation of samples with respect to certain conditions $\boldsymbol{c}$. The setup of encoders and decoders of a cVAE are inherited from the VAE framework. The ELBO is adjusted to be conditioned on $\bc$:
\begin{equation}\label{eqn:CVAE_ELBO_1}
\begin{split}
     \mathcal{L}_{\bphi_e, \bphi_d}(\bx\mid \bc) &= \log p_{\bphi_d}(\bx\mid \bc)- D_{KL}\left( q_{\bphi_e}(\bz\mid\bx,\bc)\;\|\;p_{\bphi_d}(\bz\mid \bx,\bc)\right)\\
     &=\mathbb{E}_{q_{\bphi_e}(\bz\mid\bx,\bc)}\left[ \log p_{\bphi_d}(\bx\mid\bz,\bc) \right] - D_{KL}\left( q_{\bphi_e}(\bz\mid\bx,\bc)\;\|\;p_{\bphi_d}(\bz\mid\bc)\right),
\end{split}
\end{equation}
and a Monte Carlo estimator analogous to \eqref{eqn:VAE_ELBO_3} also applies with draws $\bZ^{l}\sim q_{\bphi_e}(\bz\mid\bx,\bc)$ and (optionally) a condition-dependent prior $p_{\bphi_d}(\bz\mid\bc)$.

When the data distribution changes with observed drivers $\bc$ (e.g., circulation indices, SSTs, season), the VAE without conditioning marginalizes out condition-specific structure and prevents controlled generation under specified conditions. A cVAE remedies this by conditioning both encoder and decoder on $\bc$ to learn $p_{\btheta}(\bx\mid\bc)$, which enables \emph{scenario-controlled} simulation and counterfactuals by intervening on $\bc$, and allows dependence parameters to vary with $\bc$, accommodating nonstationarity. Practically, the conditioning also reduces posterior variance by stratifying the latent representation with informative covariates, while still sharing strength across nearby $\bc$.
There are variants of cVAE model that are designed to adopt various objectives such as segmentation recognition \citep{sohn2015learning}, predicting the remaining useful life of complex systems \citep{wei2021learning} and next-state emulation in physics-based character controllers \citep{won2022physics}. 

\subsection{XVAE}\label{subsec:XVAE}

\cite{zhang2023flexible} propose the extremes VAE (XVAE) to model high-dimensional spatial extremes through a hybrid architecture that embeds a max-infinitely divisible (max-id) model within a VAE. In the autoencoder architecture, the low‑rank latent representation learned by the encoder is mapped through a decoder that embeds a max‑id construction, which allows the model learn the extremal dependence structure outperforming Gaussian and max-stable processes or standard deep generative models.

In XVAE, the spatial observation (max-id) model is defined as
\begin{equation}\label{eqn:XVAE_X(s)}
    X(\bs)=\epsilon(\bs)Y(\bs),\;\bs\in\mathcal{S},
\end{equation}
where $\mathcal{S}\in \mathbb{R}^2$ is the domain of interest and $\epsilon(\bs)$ is a noise process with independent Fr\'{e}chet$(0,\tau,\alpha_0)$ marginal distributions:
\begin{equation}\label{eqn:frechet}
    \mathbb{P}\{\epsilon(\bs)\leq x\}=\exp\{-(x/\tau)^{-\alpha_0}\},
\end{equation}
where $x>0$, $\tau>0$ and $\alpha_0>0$. The process $Y(\bs)$ is constructed with a low-rank representation:
\begin{equation}\label{eqn:XVAE_Y(s)}
    Y(\bs)=\left\{\sum_{k=1}^K \omega_k(\bs)^{\frac{1}{\alpha}}Z_{k}\right\}^{\alpha_0},
\end{equation}
where $\alpha\in (0,1)$, and $\{\omega_k(\bs): k=1,\ldots,K\}$ are fixed compactly-supported radial basis functions (RBFs) centered at $K$ pre-specified knots. The latent variables are defined as exponentially-tilted positive-stable variables \citep{hougaard1986survival}, denoted as $Z_{k}\stackrel{\text{ind}}{\sim} \mathrm{expPS}(\alpha,\theta_k),\; k=1,\ldots, K$. The parameter $\alpha$ determines the tail behavior such that smaller $\alpha$ will lead to heavier tail. The tilting parameters $\theta_k\geq 0$ determine the extent of tilting, with larger values of $\theta_k$ leading to lighter-tailed $Z_k$. Additionally, the density of $Z_k$ is of the form
\begin{equation}\label{eqn:XVAE_h}
    h(z; \alpha, \theta_k) = \frac{f_{\alpha}(z) \exp (-\theta_k z)}{\exp (-\theta_k^{\alpha})}, \quad z>0, \quad k=1,\ldots,K,
\end{equation}
and $f_{\alpha}$ is the density function of a positive-stable variable equipping with its Laplace transform $\int_{\mathbb{R}}\exp(-sx)f_{\alpha}(x)\mathrm{d}x = \exp(-s^{\alpha}), s\geq 0$.

To accommodate the extremes framework, the encoding-decoding VAE structure is modified. For $t=1,\ldots,n_t$, the encoder is defined
\begin{equation}\label{eqn:XVAE_encoder}
 \begin{split}
 \bz_t&=\bmu_t + \bzeta_t \odot \boldsymbol{\eta}_t, \\
    \eta_{kt}&\stackrel{\text{i.i.d.}}{\sim} \mathrm{Normal}(0,1),\\
    (\bmu_t^\top,\log \bzeta_t^\top)^\top &= \mathrm{EncoderNeuralNet}_{\bphi_e}(\bx_t),
 \end{split}
\end{equation}
where $\odot$ is the elementwise product, and the encoder neural network is constructed with a fully-connected MLP network.

Unlike the Gaussianity assumption of a vanilla VAE, the latent variables are assigned to exponentially-tilted positive-stable distributions. Therefore the prior model for latent process is
\begin{equation}\label{eqn:XVAE_prior}
    p_{\bphi_d}(\bz_t) = \prodK h(z_{kt};\alpha_t, \gamma_{kt}),
\end{equation}
where $h(\cdot;\alpha_t, \gamma_{kt})$ is the density function of $\mathrm{expPS}(\alpha_t,\gamma_{kt})$.

The decoder is based on the spatial observation model we defined in Equation \eqref{eqn:XVAE_X(s)}:
\begin{equation}\label{eqn:XVAE_decoder}
     p_{\bphi_d}(\bx_t\mid\bz_t) = \left(\frac{1}{\alpha_0}\right)^{n_s} \left\{\prod_{j=1}^{n_s}  \frac{1}{x_{jt}}\left(\frac{x_{jt}}{\tau y_{jt}}\right)^{-1/\alpha_0}\right\} \exp \left\{ -\sumN \left(\frac{x_{jt}}{\tau y_{jt}}\right)^{-1/\alpha_0}\right\},
\end{equation}
where $y_{jt}=\sumK\omega_{kj}^{1/{\alpha}_t}z_{kt}$. The dependence parameter estimations and the reconstruction of the inputed are achieved via two separate neural networks
\begin{equation}\label{eqn:decoder_form}
    \begin{split}
        (\hat{\alpha}_t, \hat{\bgamma}_t^\top)^\top &= \mathrm{DecoderNeuralNet}_{\bphi_{d,0}}(\bZ_t),\\
        \hat{\bX}_t &= \mathrm{DecoderNeuralNet}_{\bphi_{d,1}}(\bZ_t),
    \end{split}
\end{equation}
where $\bphi_{d}=(\bphi_{d,0}^\top, \bphi_{d,1}^\top)^\top$ are the bias and weight parameters of the decoder neural networks.

The ELBO loss function can be calculated with the independence draws of $\bZ^1, \ldots, \bZ^L$ according to Equation (\ref{eqn:VAE_ELBO_3}). The parameters of the encoder and decoder networks, $\bphi_e, \bphi_d$ are updated via stochastic gradient descent algorithm. Uncertainty quantification is obtained from the repeated estimates of dependence parameters $\alpha_t, \gamma_t$ from the samples of $\bZ_t$ via Equations \eqref{eqn:XVAE_encoder} and \eqref{eqn:decoder_form}.

\section{Methodology}\label{sec:methods}

\subsection{Log-Laplace noise process}

The spatial extremes process embedded in our novel conditional XVAE retains the flexible max-id backbone of \citet{zhang2023flexible}, with one key modification: we replace the noise process $\epsilon(\bs)$ with independent log-Laplace$(0, 1/\alpha_0)$ marginal distributions
\begin{align}\label{eqn:logLaplace}
    \mathbb{P}\left(\epsilon(\bs)\leq x\right)=     \begin{cases}
    \frac{1}{2}\exp (\alpha_0 \log x), \quad &0 < x \leq 1, \\
    1-\frac{1}{2}\exp(-\alpha_0 \log x), \quad &x > 1,
    \end{cases}
\end{align}
where $x>0$ and $\alpha_0 > 0$. The choice of this noise process is preferred to the Fr\'{e}chet distribution because the density of log-Laplace distribution is symmetric around 1, which naturally represents balanced deviations without inflation or deflation. In contrast, the Fr\'{e}chet distribution is highly right-skewed with concentrating mass near zero and thus induces unbalanced multiplicative perturbations. Moreover, the log-Laplace distribution offers direct control over tail behavior of the noise process through its scale parameter, $1/\alpha_0$. The flexible adjustment between heavier or lighter tails is dominated by this scale parameter, e.g. smaller scale makes the error more concentrated around 1 (lighter tail).

Crucially, we show in Appendix~\ref{App:log-Laplace} that Fr\'{e}chet$(0,\tau,\alpha_0)$ and log-Laplace$(0, 1/\alpha_0)$ have the same tail index $\alpha_0$. Furthermore, the following result implies that replacing Fr\'echet noise with log-Laplace noise of the same tail index $\alpha_0$ leaves the flexible tail behavior unchanged, both marginally and jointly.

\begin{theorem}[Tail equivalence under noise replacement]\label{thm:tail_equiv}
Let $\{Y(\bs):\bs\in\mathcal{S}\}$ be a nonnegative random field that satisfies for each $\bs$, $\mathbb{E}\{Y(\bs)^{\alpha_0+\eta}\}<\infty$ for some $\eta>0$, and for all pairs $(\bs_1,\bs_2)$,
$\mathbb{E}\{Y(\bs_1)^{\alpha_0}Y(\bs_2)^{\alpha_0}\}<\infty$. Let
\begin{equation*}
X_F(\bs)=\epsilon_F(\bs)\,Y(\bs),\qquad X_L(\bs)=\epsilon_L(\bs)\,Y(\bs),
\end{equation*}
where $\{\epsilon_F(\bs)\}$ and $\{\epsilon_L(\bs)\}$ are i.i.d. across $\bs$, independent of\; $Y$, and have regularly varying tails with the same index $\alpha_0>0$:
\begin{equation}\label{eqn:tail_cond}
\bar F_{\epsilon_F}(x)\sim c_F\,x^{-\alpha_0},\qquad 
\bar F_{\epsilon_L}(x)\sim c_L\,x^{-\alpha_0}\quad(x\to\infty),
\end{equation}
for some $c_F,c_L\in(0,\infty)$. 
Then, as $x\to\infty$,
\begin{equation*}
\bar F_{X_F(\bs)}(x)\ \sim\ \frac{c_F}{c_L}\,\bar F_{X_L(\bs)}(x).
\end{equation*}

Moreover, for any $\bs_1,\bs_2\in\mathcal{S}$,
\begin{equation*}
\mathbb{P}\{X_F(\bs_1)>x,\ X_F(\bs_2)>x\}\ \sim\ \Big(\frac{c_F}{c_L}\Big)^{\!2}
\ \mathbb{P}\{X_L(\bs_1)>x,\ X_L(\bs_2)>x\}.
\end{equation*}
\end{theorem}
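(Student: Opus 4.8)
The plan is to reduce the theorem to the classical lemma of Breiman on products of a regularly varying factor with an independent factor possessing a slightly higher moment, applied once to the marginal law and once --- after conditioning on $Y$ --- to the bivariate exceedance probability. Throughout I write $\bar F_{\bullet}(x)=\mathbb P(\bullet>x)$ and use that $\epsilon_F,\epsilon_L$ are i.i.d.\ across $\bs$ and independent of the field $Y$. For the \emph{marginal part}, conditioning on $Y$ gives
\[
\bar F_{X_F(\bs)}(x)=\mathbb E\!\left[\bar F_{\epsilon_F}\!\big(x/Y(\bs)\big)\right].
\]
Since $\bar F_{\epsilon_F}$ is regularly varying at infinity with index $-\alpha_0$ and $\mathbb E\{Y(\bs)^{\alpha_0+\eta}\}<\infty$ for some $\eta>0$, Breiman's lemma gives $\bar F_{X_F(\bs)}(x)\sim\mathbb E\{Y(\bs)^{\alpha_0}\}\,\bar F_{\epsilon_F}(x)\sim c_F\,\mathbb E\{Y(\bs)^{\alpha_0}\}\,x^{-\alpha_0}$. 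The identical argument with $\epsilon_L$ yields $\bar F_{X_L(\bs)}(x)\sim c_L\,\mathbb E\{Y(\bs)^{\alpha_0}\}\,x^{-\alpha_0}$, and dividing the two equivalences cancels the finite, positive factor $\mathbb E\{Y(\bs)^{\alpha_0}\}$, leaving $\bar F_{X_F(\bs)}(x)\sim(c_F/c_L)\,\bar F_{X_L(\bs)}(x)$.

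\emph{Bivariate part.} For $\bs_1,\bs_2$, conditionally on $Y$ the variables $\epsilon_F(\bs_1)$ and $\epsilon_F(\bs_2)$ are independent, so
\[
\mathbb P\{X_F(\bs_1)>x,\,X_F(\bs_2)>x\}=\mathbb E\!\left[\bar F_{\epsilon_F}\!\big(x/Y(\bs_1)\big)\,\bar F_{\epsilon_F}\!\big(x/Y(\bs_2)\big)\right].
\]
I would divide by $\bar F_{\epsilon_F}(x)^2\sim c_F^2x^{-2\alpha_0}$; by regular variation the integrand converges almost surely to $Y(\bs_1)^{\alpha_0}Y(\bs_2)^{\alpha_0}$, which is integrable by the second moment hypothesis, so what remains is to interchange limit and expectation. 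To do so I would split the sample space according to whether each $Y(\bs_i)$ lies below or above $x/M$ for a fixed large $M$: on the region where both ratios $x/Y(\bs_i)$ stay large, Potter's bounds dominate $\bar F_{\epsilon_F}(x/Y(\bs_i))/\bar F_{\epsilon_F}(x)$ by a constant times $\max\{Y(\bs_i)^{\alpha_0-\delta},Y(\bs_i)^{\alpha_0+\delta}\}$ for arbitrarily small $\delta>0$, and the moment hypotheses supply an $x$-uniform integrable majorant for the product; on the complementary region, where some $Y(\bs_i)$ is comparable to $x$ or larger, I would bound the corresponding tail factor by $1$ and combine Markov's inequality (with the $(\alpha_0+\eta)$-th moment of $Y$) with $\bar F_{\epsilon_F}(x)\sim c_Fx^{-\alpha_0}$ to show that contribution is $o(x^{-2\alpha_0})$. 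Dominated convergence then gives $\mathbb P\{X_F(\bs_1)>x,X_F(\bs_2)>x\}\sim c_F^2x^{-2\alpha_0}\,\mathbb E\{Y(\bs_1)^{\alpha_0}Y(\bs_2)^{\alpha_0}\}$, the same holds with $c_L$ in place of $c_F$, and taking the ratio produces the $(c_F/c_L)^2$ prefactor.

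\emph{Main obstacle.} The one nonroutine step is the domination in the bivariate case: producing a single integrable bound for $\bar F_{\epsilon_F}(x/Y(\bs_1))\bar F_{\epsilon_F}(x/Y(\bs_2))/\bar F_{\epsilon_F}(x)^2$ that is uniform over large $x$. The delicate regime is where \emph{both} $Y(\bs_1)$ and $Y(\bs_2)$ are large simultaneously --- a crude product of two one-variable Potter bounds would ask for a cross-moment of order $2(\alpha_0+\delta)$, which explains why the hypothesis is stated through the joint quantity $Y(\bs_1)^{\alpha_0}Y(\bs_2)^{\alpha_0}$, and the splitting must be arranged so that only this moment, together with the marginal $(\alpha_0+\eta)$-moments used near the diagonal, is needed. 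Everything else --- the two conditioning identities, the pointwise limits from regular variation, and the cancellation of $\mathbb E\{Y(\bs)^{\alpha_0}\}$ in the marginal case and of $\mathbb E\{Y(\bs_1)^{\alpha_0}Y(\bs_2)^{\alpha_0}\}$ in the bivariate case --- is bookkeeping.
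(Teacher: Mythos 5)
Your overall architecture coincides with the paper's: condition on $Y$, obtain the marginal asymptotics $\bar F_{X_F(\bs)}(x)\sim c_F\,\mathbb{E}\{Y(\bs)^{\alpha_0}\}\,x^{-\alpha_0}$ (your appeal to Breiman's lemma is in substance the paper's Potter-bound-plus-dominated-convergence argument, so the marginal half of your proposal is complete), and for the joint tail write $\mathbb{P}\{X_F(\bs_1)>x,X_F(\bs_2)>x\}=\mathbb{E}\bigl[\bar F_{\epsilon_F}(x/Y(\bs_1))\,\bar F_{\epsilon_F}(x/Y(\bs_2))\bigr]$, divide by $\bar F_{\epsilon_F}(x)^2$, and justify the interchange of limit and expectation. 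Where you genuinely depart from the paper is this domination step: the paper multiplies two Potter bounds and invokes H\"older to assert $\mathbb{E}\{Y(\bs_1)^{\alpha_0+\delta}Y(\bs_2)^{\alpha_0+\delta}\}<\infty$, and your ``main obstacle'' paragraph correctly senses that such a cross-moment is not delivered by the stated hypotheses (take $Y(\bs_1)=Y(\bs_2)=Y$ with $\mathbb{E}\{Y^{2\alpha_0}\}<\infty$ but $\mathbb{E}\{Y^{2\alpha_0+\epsilon}\}=\infty$ for every $\epsilon>0$ and any $\eta<\alpha_0$). Your instinct to truncate at $x/M$ is therefore the right one.

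As written, however, neither half of your splitting closes, so there is a genuine gap. On $\{Y(\bs_1)\le x/M,\,Y(\bs_2)\le x/M\}$ you again dominate each ratio by a Potter bound with exponent $\alpha_0+\delta$, so the claimed ``$x$-uniform integrable majorant'' for the product is exactly the unavailable cross-moment of order $\alpha_0+\delta$; and on the complement, Markov with the $(\alpha_0+\eta)$-th marginal moment yields only $O(x^{-(\alpha_0+\eta)})$ (or $O(x^{-\alpha_0-\eta})$ after a H\"older decoupling), which is $o(x^{-2\alpha_0})$ only when $\eta>\alpha_0$. Both problems disappear if you use the full strength of \eqref{eqn:tail_cond}, namely that $\bar F_{\epsilon_F}(u)u^{\alpha_0}\to c_F$: choose $u_0$ with $\bar F_{\epsilon_F}(u)\le 2c_F u^{-\alpha_0}$ for $u\ge u_0$ and $\bar F_{\epsilon_F}(x)\ge \tfrac12 c_F x^{-\alpha_0}$ for $x\ge u_0$, so that $\bar F_{\epsilon_F}(x/y)/\bar F_{\epsilon_F}(x)\le 4y^{\alpha_0}$ whenever $0<y\le x/u_0$, with exponent exactly $\alpha_0$. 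On the good region the product is then dominated by $16\,Y(\bs_1)^{\alpha_0}Y(\bs_2)^{\alpha_0}$, integrable by hypothesis, and dominated convergence gives the limit $\mathbb{E}\{Y(\bs_1)^{\alpha_0}Y(\bs_2)^{\alpha_0}\}$. On the complement, use the joint moment rather than the marginal one: $\mathbb{P}\{Y(\bs_1)>x/u_0,\,Y(\bs_2)>x/u_0\}\le (u_0/x)^{2\alpha_0}\,\mathbb{E}\{Y(\bs_1)^{\alpha_0}Y(\bs_2)^{\alpha_0}\mathbbm{1}(Y(\bs_1)>x/u_0)\}=o(x^{-2\alpha_0})$, and on $\{Y(\bs_1)>x/u_0,\,Y(\bs_2)\le x/u_0\}$ bound the first factor by $1$ and the second by $2c_Fx^{-\alpha_0}Y(\bs_2)^{\alpha_0}$, then $\mathbb{E}\{Y(\bs_2)^{\alpha_0}\mathbbm{1}(Y(\bs_1)>x/u_0)\}\le (u_0/x)^{\alpha_0}\mathbb{E}\{Y(\bs_1)^{\alpha_0}Y(\bs_2)^{\alpha_0}\mathbbm{1}(Y(\bs_1)>x/u_0)\}=o(x^{-\alpha_0})$ by dominated convergence, so this region is also $o(x^{-2\alpha_0})$ (symmetrically for the other mixed region). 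With these two substitutions your truncation scheme establishes $\mathbb{P}\{X_F(\bs_1)>x,X_F(\bs_2)>x\}\sim c_F^2\,\mathbb{E}\{Y(\bs_1)^{\alpha_0}Y(\bs_2)^{\alpha_0}\}\,x^{-2\alpha_0}$ under exactly the stated moment conditions, and taking the ratio with the $\epsilon_L$ version gives the $(c_F/c_L)^2$ factor as you describe.
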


\begin{remark}
The proof of this Theorem is detailed in Appendix~\ref{sec:proofs}. For $\epsilon_F\sim\text{Fr\'echet}(0,\tau,1/\alpha_0)$,
$\bar F_{\epsilon_F}(x)\sim \tau^{\alpha_0}x^{-\alpha_0}$ so $c_F=\tau^{\alpha_0}$.
For $\epsilon_L\sim\text{log-Laplace}(0,1/\alpha_0)$,
$\bar F_{\epsilon_L}(x)=\tfrac12 x^{-\alpha_0}$ for $x>1$, so $c_L=\tfrac12$. Therefore, replacing Fr\'echet noise by log-Laplace noise with the same tail index $\alpha_0$ preserves the marginal tail decay rate $x^{-\alpha_0}$ and the bivariate joint tail decay rate $x^{-2\alpha_0}$; only the multiplicative tail constants differ.
\end{remark}


\subsection{Conditional XVAE with learnable basis functions}\label{subsec:Conditional XVAE}

Here, we focus on reconstructing the observed data and the extremal dependence structures as well as on understanding the influence of climate variables.
Denote the data realization as $X(\bs)$ and the conditions as $\boldsymbol{c}$. The data-level model is formulated as
\begin{equation}\label{eqn:model_X}
X(\bs)\mid\boldsymbol{c}=\epsilon(\bs)\times \left(Y(\bs)\mid\boldsymbol{c}\right),\;\bs\in\mathcal{S},
\end{equation}
where $\mathcal{S}\in \mathbb{R}^2$ is the desired spatial domain, and $\{\epsilon(\bs)\}$ is the independent log-Laplace process defined in Equation~\eqref{eqn:logLaplace}. Note that we are using the notation $(Y(\bs)|\bc)$ in (\ref{eqn:model_X}) to denote the dependence of $Y$ on the conditioning variables, $\bc$. Then, the process $\{Y(\bs)\}$ conditioned on $\bc$ is constructed via a low-rank representation:
\begin{equation}\label{eqn:model_Y}
    Y(\bs)\mid \bc=\sum_{k=1}^K \omega_k(\bs)(Z_k \mid\bc), \quad Z_k \mid\boldsymbol{c}\stackrel{\text{ind}}{\sim} \text{expPS}(\alpha, \theta_k(\boldsymbol{c})),
\end{equation}
where $k=1,\ldots, K$, $0 < \alpha < 1$, and $\delta>0$ and $\theta_k(\boldsymbol{c}) \geq 0$ changes with the condition $\bc$. The set $\{\omega_k(\bs):\bs\in\mathcal{S},\ k=1,\ldots,K\}$ are basis spatial basis functions (i.e., RBFs in this application) that map the observations from the physical space to the latent space, with $Z_k$ the basis projection coefficients (again, we explicitly denote the dependence of these coefficients on the condition variables). The choice between local and global basis functions depends on the modeling objectives and the needs of interpretability. In \citet{zhang2023flexible}, the basis functions are predefined as compactly supported RBFs, which are efficient but not flexible enough. In this work, instead of fixing the basis functions, we treat $\boldsymbol{W}=\{\omega_k(\bs_j): j=1,\ldots,n_s,\ k=1,\ldots,K\}$ as \textit{unknown} and \textit{learnable} parameters within the conditional XVAE framework, optimized jointly with the network weights (see Figure~\ref{fig:diagram}). This formulation represents a potentially significant advancement over previous work, as it allows the model to learn, in an adaptive manner, how the latent space interacts with the observed process for different datasets, rather than imposing a fixed spatial structure. This improvement also preserves the interpretability of $\boldsymbol{W}$. 

The latent variables $Z_k$ are still assumed to have exponentially-tilted positive-stable distributions, but the conditions $\boldsymbol{c}$ are allowed to impact the tilting parameter $\theta_k$.
Consequently, the tail thickness of the latent factors $\{Z_k:k=1,\ldots, K\}$ becomes \emph{explicitly condition-indexed} and hence the spatial clustering and strength/range of extremal co-occurrence changes over time. In particular, the pairwise tail summaries (e.g., $\chi_{ij}(\bc)$ and $\eta_{ij}(\bc)$) are allowed to vary with $\bc$, permitting condition-dependent behavior and even AD/AI transitions as climate drivers change.


As described in Section~\ref{subsec:CVAE}, a cVAE reconstructs inputs while learning a conditionally structured latent space. For our model~\eqref{eqn:model_X}-\eqref{eqn:model_Y}, the objects of primary interest are the condition-dependent latent variables and tilting parameters that control extremal dependence. Let's consider the data $\bx_t = \{ X_t(\bs_j) : j=1, \ldots, n_s \}$ at time $t=1,\ldots,n_t$, and latent variables $\bz_t = \{ Z_{kt} : k = 1, \ldots, K \}$. The condition $\boldsymbol{c}_t$ is in general a multivariate time series, e.g., an ENSO climate index or time-varying spatial fields.

\paragraph{Approximate Posterior/Encoder ($q_{\bphi_e}(\bz_t\mid \bx_t,\bc_t)$):} The encoder maps the observed field and conditions $(\bx_t,\bc_t)$ to a variational posterior over the latent vector $\bz_t\in \mathbb{R}^K$:  
\begin{equation}\label{eqn:encoer_form_xvae}
\begin{split}
\log \bz_t=\log \bmu_t +g(\bc_t)+ \bsigma_t \odot \boldsymbol{\epsilon}_t, \quad\boldsymbol{\epsilon}_t\stackrel{\text{ind}}{\sim} \mathrm{MVN}(\boldsymbol{0}, \boldsymbol{I}),\\
    (\bmu_t^{\rm T},\bsigma_t^{\rm T})^{\rm T} = \mathrm{EncoderNeuralNet}_{\bphi_e}(\bx_t),
\end{split}
\end{equation}
where $t=1,\ldots,n_t$. The vectors $\bmu_t\in \mathbb{R}^K$ and $\log \bsigma_t\in \mathbb{R}^K$ are obtained from the input through the encoder neural network, for which we deploy a fully connected MLP with Softplus activations. We choose a MLP instead of a convolutional neural network (CNN) here so the encoder can naturally handle inputs defined on irregular point sets rather than on a regular grid. The Softplus activation ensures the realizations of $\bmu_t$ and $\bsigma_t$ are positive. Compared to~\eqref{eqn:encoder_form}, we employ the reparameterization trick on the log scale to ensure the non-negativity of $\bz_t$ given the auxiliary variable $\boldsymbol{\epsilon}_{t}$ is sampled from the standard normal distribution. 
Additionally, the term $g(\cdot)$ in~\eqref{eqn:encoer_form_xvae} introduces the condition $\bc_t$ into the $K$-dimensional latent space through a linear mapping. In our formulation, $g(\cdot)$ provides a simple and direct mechanism for injecting covariate information into the latent representation. As noted in~\ref{eqn:CVAE_ELBO_1}, many alternative designs could be used to incorporate conditions into the latent space—for example, nonlinear transformations or more expressive networks. Our choice of a linear mapping is a problem-specific, computationally efficient design tailored to the needs of this project. See the top left panel of~Figure~\ref{fig:diagram} to see schematic flow of the encoding process.
\begin{figure}[!t]
    \centering
    \includegraphics[width=0.99\linewidth]{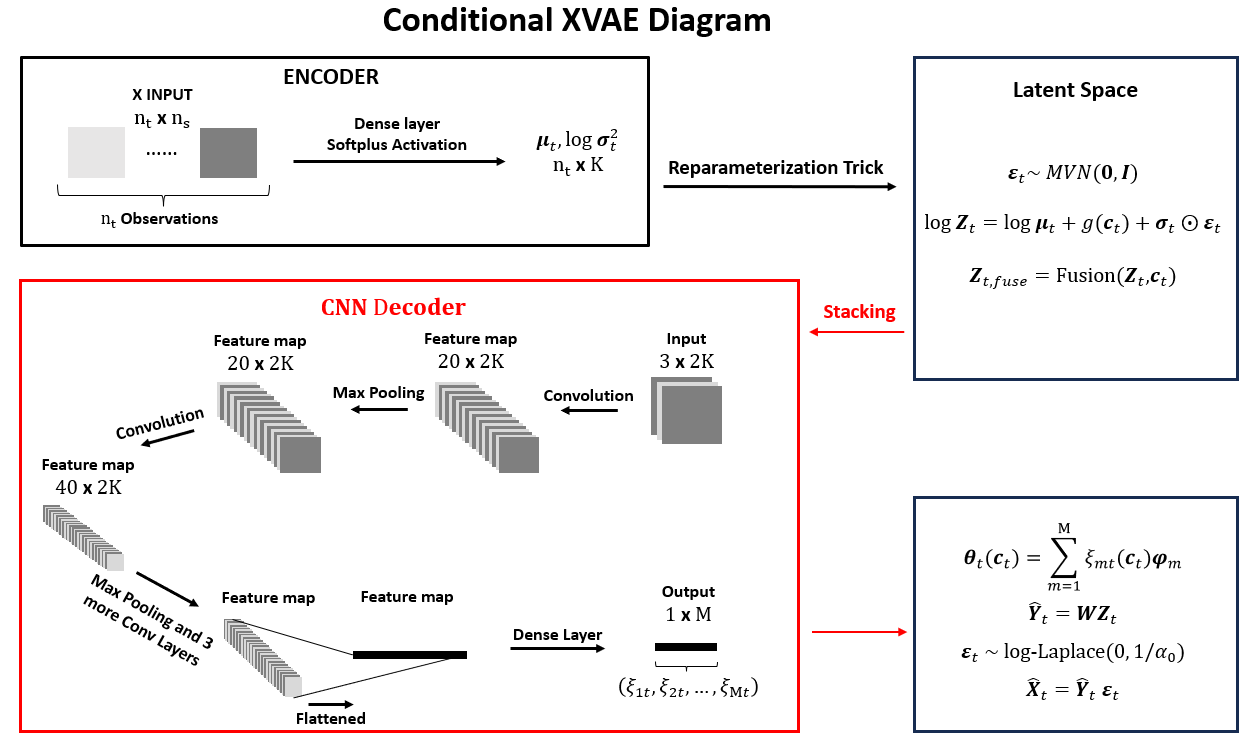}
    \caption{Conditional XVAE architecture with three main components: an encoder, a latent space, and a CNN decoder. \textbf{Encoder} (top left): For each time $t=1,\ldots,n_t$, the input spatial fields $\bX_t$ of $n_s$ locations is mapped through a dense layer with Softplus activation to produce mean $\bmu_t$ and log-variance $\log \bsigma_t$ vectors of dimension $K$. \textbf{Latent Space} (top right): Latent variables are constructed on the log scale with transformed conditions $g(\bc_t)$ and fused with $\bc_t$ as in~\eqref{eqn:cXVAE_fuse}. \textbf{CNN decoder} (red box): The fused latent variables are stacked and transposed to form structured inputs. Convolution and max pooling layers extract feature maps (e.g., from $3 \times 2K$ to $40 \times 2K$), which are flattened and passed through a dense layer to yield coefficients $\{ \xi_{1t}, \xi_{2t}, \ldots, \xi_{Mt}\}$, as defined in~\eqref{eqn:CNNdecoder}. The generative process is summarized in the bottom right. The tilting parameters $\btheta_t$ that control extremal dependence are estimated via pre-specified basis functions $\varphi_{mt}$. The de-noised response $\boldsymbol{Y}_t$ is obtained as a linear combination of latent variables $\bZ_t$ and learnable weights $\boldsymbol{W}$, and the response surface $\bX_t$ is generated by introducing log-Laplace noise $\boldsymbol{\epsilon}_t$.}
    \label{fig:diagram}
\end{figure}

\paragraph{Data Model/CNN Decoder ($p_{\bphi_d}(\bx_t\mid \bz_t,\bc_t)$):} The choice of CNNs here is crucial for several reasons. First, convolutions preserve local spatial structure, which is essential for representing the clustering behavior of extremes. Second, CNNs require far fewer parameters (weights, biases, kernels) than fully connected architectures. By learning local kernels shared across the domain, they can extract complex spatial features without treating each latent variable independently. This parameter efficiency not only reduces memory requirements but also improves stability during training, helping to mitigate issues such as exploding or vanishing gradients. Finally, CNNs are particularly effective at detecting rare but strong signals, a defining characteristic of extremes. Convolutional layers are naturally dominated by regions where extreme events occur, amplifying their influence on subsequent layers. In contrast, fully-connected layers average signals across all locations, eliminating the effect of extremes. From this perspective, CNNs provide a principled mechanism for enhancing sensitivity to extremes and improving the quality of the estimated random coefficients.

To strengthen the effect of the conditions $\bc_t$, we fuse the encoded latent variable $\bz_t$ with $\bc_t$ after the reparameterization trick. Without loss of generality, we consider the case of scalar climate conditions, $c_t \in \mathbb{R}$. The fusion interleaves latent and condition variables
\begin{equation}\label{eqn:cXVAE_fuse}
    \bz_{t,\text{fuse}}(c_t) = (z_{1t}, c_t, z_{2t}, c_t, \ldots, z_{Kt}, c_t)^\top,
\end{equation}
which is of dimension $2K \times 1$. Interleaving provides an effective way to inject conditioning information while preserving the encoded features. As the fused latent variables pass through the convolutional layers, the conditioning variables strongly influence the output because they contribute directly to the combinations alongside the latent variables when passing through the kernel filters during the convolution process.

The tilting parameter field $\btheta_t\in \mathbb{R}^K$ is too high-dimensional to learn directly, so we represent it with lower-dimensional basis function representation:
\begin{align}\label{eqn:theta_expansion}
    {\btheta}_t(\bc_t) = \sum_{m=1}^M \xi_{mt}(\bc_t) \boldsymbol{\varphi}_{m},
\end{align}
in which $M\leq K$, $\{\boldsymbol{\varphi}_{m}\in\mathbb{R}^K:m=1,\ldots, M\}$ are radial basis functions discretized over the latent space, and the coefficients $\boldsymbol{\xi}_t(\bc_t)=(\xi_{1t}(\bc_t),\ldots, \xi_{Mt}(\bc_t))^\top$ are the output of one of the decoder neural networks
\begin{equation}\label{eqn:CNNdecoder}
    \boldsymbol{\xi}_t(\bc_t) = \text{DecoderNeuralNet}_{\bphi_{d,0}}(\bz_{t-1,\text{fuse}}(\bc_{t-1}), \bz_{t,\text{fuse}}(\bc_t), \bz_{t+1,\text{fuse}}(\bc_{t+1})).
\end{equation}
To enforce richer temporal context, the decoder neural network in~\eqref{eqn:CNNdecoder} concatenates fused latent fields from three consecutive time steps and uses them as short-window pseudo-replicates. This assumes the latent process evolves smoothly and is more informative than a single snapshot. 

Lastly, the other decoder neural network for reconstructing the process $Y_t(\bs)$ is a linear mapping:
\begin{equation}
    \by_{t}(\bc_t) = \boldsymbol{W}\bz_t,
\end{equation}
where the architecture is consistent with~\eqref{eqn:model_Y} and $\boldsymbol{W}$ are unknown weights. In this case, $\bW$ corresponds to $\bphi_{d,1}$ in Equation~\eqref{eqn:decoder_form}. Then we impose the flexible extreme model introduced in~\eqref{eqn:model_X}-\eqref{eqn:model_Y} on the decoder
\begin{equation}\label{eqn:decoder_CDF}
p(\bX_t \leq \bx_t \mid \bz_t,\bc_t) = \prod_{j\in \mathcal{J}_t} \left(\frac{1}{2} x_{jt}^{\alpha_0}  y_{jt} ^{-1} \right) \cdot  \prod_{j\not\in \mathcal{J}_t} \left(1-\frac{1}{2} x_{jt}^{-\alpha_0} y_{jt} \right),
\end{equation}
where the conditional CDF is derived using the CDF of log-Laplace distribution, $y_{jt}$ is the corresponding element in the process $\by_{t}(\bc_t)$ and $\mathcal{J}_t = \{ j \in (1,\ldots,n_s): 0<x_{jt}/y_{jt}<1\}$. Differentiating (\ref{eqn:decoder_CDF}) over $\bx_t$ gives
\begin{equation}\label{eqn:decoder_PDF}
    p_{\bphi_d}(\bx_t \mid \bz_t,\bc_t) = \prod_{j\in \mathcal{J}_t} \frac{\alpha_0 x_{jt}^{\alpha_0-1}}{2y_{jt}^{\alpha_0}} \cdot \prod_{j\not\in \mathcal{J}_t} \frac{\alpha_0 x_{jt}^{-\alpha_0-1}}{2y_{jt}^{-\alpha_0}}.
\end{equation}

\paragraph{Prior on Latent Process ($p_{\bphi_d}(\bz_t\mid \bc_t)$):} As mentioned in Equation~\eqref{eqn:model_Y}, the prior distribution $p_{\bphi_d}(\bz_t \mid \boldsymbol{c}_t)$ is the exponentially-tilted positive-stable distribution, and we denote the density of it as $h(z_{kt}; \alpha,\theta_{kt}(\bc_t))$. The joint density of $\bz_t$ is
\begin{equation}\label{eqn:prior}
        p_{\bphi_d}(\bz_t \mid \boldsymbol{c}_t) = \prodK h(z_{kt};\alpha,\theta_{kt}(\bc_t)),
\end{equation}
for $t = 1,\ldots,n_t$. 

As in~\eqref{eqn:CVAE_ELBO_1}, combining the forms of $q_{\bphi_e}(\bz_t\mid \bx_t,\bc_t)$ in~\eqref{eqn:encoer_form_xvae}, $p(\bx_t \mid \bz_t,\bc_t)$ in~\eqref{eqn:decoder_PDF} and $p_{\bphi_d}(\bz_t \mid \boldsymbol{c}_t)$ in~\eqref{eqn:prior} yields a Monte-Carlo ELBO that enforces the model’s structure (basis representation, decoder likelihood, and parameterization). 

Beyond the standard ELBO, we introduce an additional penalty term to enforce temporal continuity in the estimated dependence parameters. The rationale is that the underlying extremal dependence structure is expected to evolve smoothly over time. The irregular fluctuations in the coefficient estimates may reflect noise or instability in the optimization. To encourage coherence in time, we penalize discrepancies between adjacent time points in the coefficient vector $\boldsymbol{\xi}_t(\bc_t)=(\xi_{1t}(\bc_t),\ldots, \xi_{Mt}(\bc_t))^\top$. Specifically, we define the penalty as
\begin{equation*}
\rho_t = \rho_0 \sum_{m=1}^M \frac{\xi_{mt}(\bc_t) - \xi_{m(t-1)}(\bc_t)}{c_t - c_{t-1}},
\end{equation*}
which measures the average normalized difference of coefficients across consecutive times. The influence of this penalty is controlled by a hyperparameter $\rho_0$, which balances the contribution of the standard ELBO (fitting the data) against the temporal smoothness of the parameter estimates. This continuity term is then subtracted from the ELBO objective (since the ELBO is maximized), providing a form of temporal regularization that discourages irregular jumps while preserving flexibility for gradual changes of $\boldsymbol{\xi}_t(\bc_t)$ (and consequently in $\btheta_t(\bc_t)$):
\begin{equation}\label{eqn:cXVAE_ELBO_1}
    \mathcal{L}_{\bphi_e, \bphi_d}^{\star}(\bx_t \mid \bc_t) = \mathcal{L}_{\bphi_e, \bphi_d}(\bx_t \mid \bc_t) - \rho_t,
\end{equation}
where $\mathcal{L}_{\bphi_e, \bphi_d}(\bx_t \mid \bc_t)$ is formed as in Equation~\eqref{eqn:CVAE_ELBO_1}. Full derivations of $\mathcal{L}_{\bphi_e, \bphi_d}^{\star}(\bx_t \mid \bc_t)$ are provided in Appendix~\ref{App:ELBO}.

We implement the Conditional XVAE algorithm in PyTorch \citep{paszke2019pytorch} that utilizes tape-based autograd (reverse mode automatic differentiation). With the optimization of ELBO described in Equation~\eqref{eqn:cXVAE_ELBO_1}, the weight and bias parameters defined in the encoder and decoder neural networks are updated with the stochastic gradient descent (SGD) algorithm. During the update of parameters, the Adam optimizer \citep{kingma2014adam} was used to adjust the learning rates and the proportion of updates.

\subsection{Evaluation Metrics}\label{subsec:metric}

To highlight the performance of our model and demonstrate the contribution of climate conditions, we compare the results obtained when incorporating the observed climate conditions with those obtained using \textit{white noise conditions} of the same scale. This comparison shows how much the inclusion of climate conditions contributes to the overall performance of the model.

To measure the performance of our model, we first examine the alignment of the extreme event occurrences of the truth and emulation by evaluating $\chi_{ij}(u)$. The coefficient $\chi_{ij}(u)$ identifies the extremal dependence between two random variables $X_i$ and $X_j$ (which may correspond to the observations at two different spatial locations):
\begin{equation*}
    \chi_{ij}(u) = \mathbb{P}\left\{ F_j(X_j) > u \mid F_i(X_i) > u \right\},
\end{equation*}
for some threshold $u \in (0, 1)$ and $F_j, F_i$ are the marginal distribution functions for variables $X_j$ and $X_i$, respectively. The coefficient $\chi$ represents the conditional probability that an extreme event at location $j$ occurs given an extreme event has occurred at location $i$. As $u \xrightarrow{}1$, $X_i$ and $X_j$ are said to be asymptotically independent (AI) if $\chi_{ij} = 0$, and asymptotic dependent (AD) if $\chi_{ij} > 0$.

Then, to characterize the overall dependence strength within the spatial domain, we use the metric of the averaged radius of exceedances (ARE) proposed by \cite{zhang2022accounting}. This metric measures the joint exceedance of quantile $u$ of the empirical cumulative distribution function (CDF) with respect to an arbitrary reference point. Say we have a number of independent replicates for every cell on a regular grid $\mathcal{G} = \{g_i \in \mathcal{S} : i = 1, \ldots, n_g\}$ over the domain $\mathcal{S}$ with side length of the grid unit $\psi > 0$, the number of replicates is denoted by $n_r$, over the total $n_g$ grid cells. Then the vector of realizations at cell $g_i$ is denoted by $\mathbf{X}_r = \{X_r(g_i) : i = 1, \ldots, n_g\}$, $r = 1, \ldots, n_r$. The empirical marginal distribution function of the cell $g_i$ can be calculated via
\begin{equation*}
    \hat{F}_i(x) = n_r^{-1} \sum_{r=1}^{n_r} \mathbbm{1}(X_r(g_i) \le x),
\end{equation*}
where $\mathbbm{1}\{\cdot\}$ is the indicator function. For each cell $g_i$, we then transform $(X_1(g_i), \ldots, X_{n_r}(g_i))^\top$ to the uniform scale via $U_{ir} = \hat{F}_i(X_r(g_i))$, $r = 1, \ldots, n_r$. Let $\mathbf{U}_r = \{U_{ir} : i = 1, \ldots, n_g\}$ and $U_{0r} = \hat{F}_0\{X_r(\bs_0)\}$, which is the empirical marginal distribution function at the arbitrary reference point $\bs_0$. The ARE metric at the threshold $u$ is defined by
\begin{equation}
\widehat{\mathrm{ARE}}_{\psi}(u) = \left\{ \frac{\psi^2 \sum_{r=1}^{n_r} \sum_{i=1}^{n_g} \mathbbm{1}(U_{ir} > u, U_{0r} > u)}{\pi \sum_{r=1}^{n_r} \mathbbm{1}(U_{0r} > u)} \right\}^{1/2}.
\end{equation}
Within each replicate, the ARE metric first evaluates the area where the grid cells jointly exceed the quantile $u$ along with the reference point $\bs_0$. This area is then converted to a radial scale using the unit side length of the grid and the factor $\pi$. Finally, the metric is averaged across all replicates to adequately represent the spatial extent of extreme events in the domain of interest. See more of the asymptotic properties of the ARE metric in \cite{zhang2023flexible}.

To measure the quality of the reconstructed spatial fields, we compute the tail-weighted continuous ranked probability score (twCRPS) \citep{gneiting2011comparing} across time for each location. For a predictive CDF $F_{it}$ and an observed value $x_{it}$ at location $\bs_i$ and time $t$, the tailed-weighted CRPS is defined as:
\begin{equation*}
    \text{twCRPS}(F_{it}, x_{it}) = \int_{-\infty}^{\infty} w(z)\{F_{it}(z)-\mathbbm{1}(z \geq x_{it})\}^2 \mathrm{d}z,
\end{equation*}
where $\mathbbm{1}(\cdot)$ is the indicator function and $w(z)$ is the weight function. To focus on upper tails, we let $w(z) = \mathbbm{1}(z > u_{90})$ and $u_{90}$ is the 90th percentile of the 2000 samples emulated at location $\bs_i$ and time $t$. The CDF $F_{it}$ is estimated empirically using the same samples. Lower tailed-weighted CRPS values indicate better model performance as they assess the squared discrepancy between the observed realizations and the predictive distribution. Examining tailed-weighted CRPS by location allows visualization of alignment between truth and emulation. Additionally, we complement the tailed-weighted CRPS with Quantile-Quantile (Q-Q) plots, which compare the empirical distribution of observed values against that of the model emulations. Under a well-specified model, the points should align closely along the $45^\circ$ reference line, indicating consistency between the observed and predicted distributions.
\section{Simulation Study and Results}\label{sec:simulation}

To demonstrate the ability of the conditional XVAE to characterize the non-stationary extremal dependence structures of large-scale spatial datasets, we simulate the data on a $50 \times 50$ regular grid within the square $[0,20]\times [0,20]$. We use the univariate El Ni\~{n}o/Southern Oscillation (ENSO) index as a climate condition (denoted by the scalar $c_t$). The ENSO index is originally reported for overlapping three-month periods from January 1980 to December 2023 including 528 time points ($n_t=528$), and is publicly accessible through the Climate Indices List of the National Oceanic and Atmospheric Administration at \href{https://psl.noaa.gov/data/climateindices/list/}{https://psl.noaa.gov/data/climateindices/list/}. Large positive values of the ENSO index indicate a strong signal of El Ni\~{n}o occurrence and large negative values correspond to La Ni\~{n}a occurrence. To obtain a smoother and more continuous representation, we further apply a centered five-month moving average to the ENSO series to smooth out year-to-year variability
Also, we normalize it to the support of $[0,1]$; see Figure~\ref{fig:ENSO} for the ENSO time series.

All experiments were performed on a desktop machine equipped with an
Intel\textsuperscript{®} Core\textsuperscript{TM} i5-9600K CPU @ 3.70\,GHz (6 cores, 6 threads) and
48\,GB of RAM. No GPU acceleration was used.

In both the simulation study and the real-world data analysis, we applied the same hyperparameter tuning strategy. To select the optimal configuration, we performed a grid search over a predefined set of candidate values. Specifically, we constructed a grid for key hyperparameters, including the learning rate, network initialization, and architectural components, and trained the model under each setting. Model performance was evaluated using the negative ELBO loss as the criterion, and the hyperparameter combination that achieved the lowest negative ELBO loss was selected as the final configuration.

\subsection{Simulation Setup}\label{subsec:sim_setup}

\begin{figure}[!ht]
\centering
    \begin{subfigure}[t]{0.99\linewidth}
        \centering
        \includegraphics[width=\linewidth]{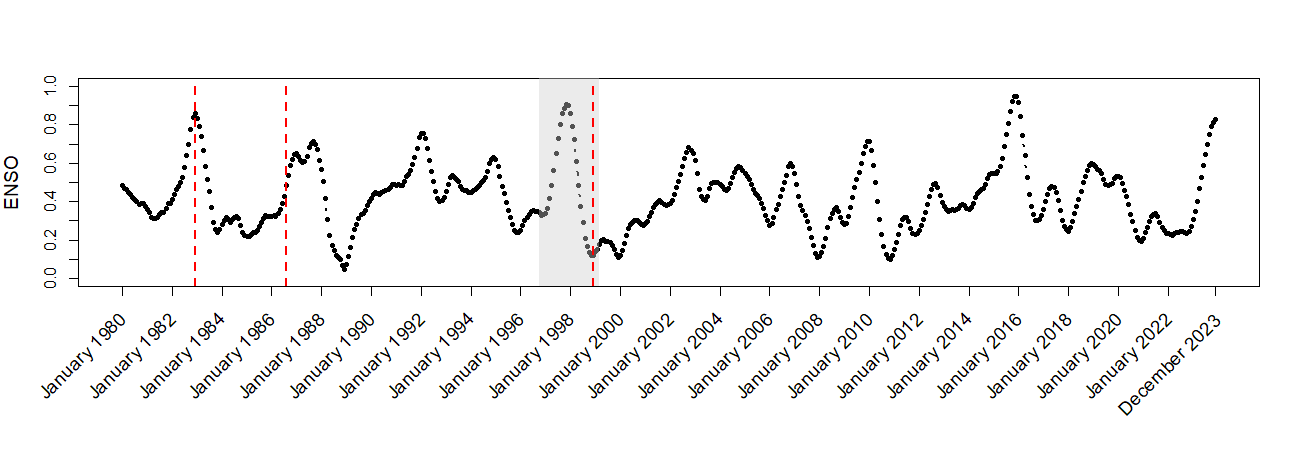}
        \caption{}
        \label{fig:ENSO}
    \end{subfigure}
    
    \vspace{0.5em}
    
    \begin{subfigure}[t]{0.32\linewidth}
        \centering
        \includegraphics[width=\linewidth]{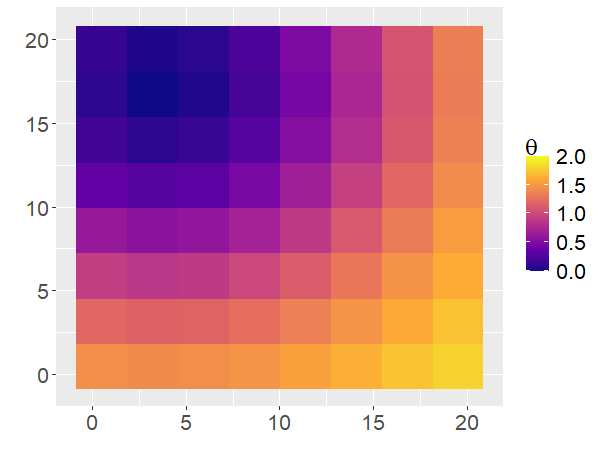}
        \caption{}
        \label{fig:2_1}
    \end{subfigure}
    \hfill
    \begin{subfigure}[t]{0.32\linewidth}
        \centering
        \includegraphics[width=\linewidth]{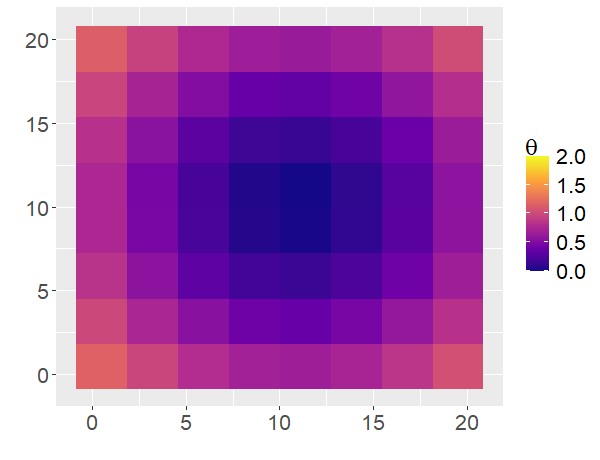}
        \caption{}
        \label{fig:2_2}
    \end{subfigure}
    \hfill
    \begin{subfigure}[t]{0.32\linewidth}
        \centering
        \includegraphics[width=\linewidth]{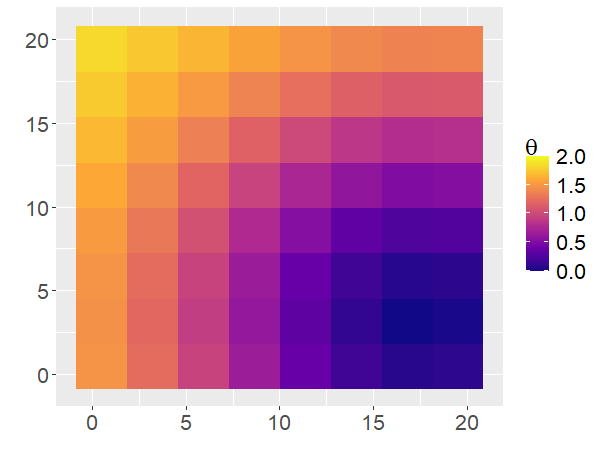}
        \caption{}
        \label{fig:2_3}
    \end{subfigure}
    
    \caption{(a): The smoothed ENSO time series $c_t$ is shown as black dots after applying a 5-month moving average to the raw ENSO time series. (b): Simulated $\btheta_t(c_t)$ when $c_t= 0.859$ in December 1982 (first red dash line). (c): Simulated $\btheta_t(c_t)$ when $c_t= 0.482$ in August 1986 (second red dash line). (d): Simulated $\btheta_t(c_t)$ when $c_t= 0.118$ in December 1998 (third red dash line).}
    \label{fig:MEI and thetas}
\end{figure}
The simulation starts with the generation of the tilting parameters in the exponentially-tilted positive stable distribution. We construct a spatial field $\btheta_t(c_t)$ that varies with the ENSO index $c_t$. Specifically,  we generate $\btheta_t(c_t)$ using a powered-exponential kernel with a fixed bandwidth and its center shifting along the off-diagonal line (between two anchor points $(0,20)^\top$ and $(20,0)^\top$):\begin{equation}\label{eqn:sim_theta}
\begin{split}
\boldsymbol{l}_t&=c_t(0,20)^\top+(1-c_t)(20,0)^\top,\\
\theta_{kt}(c_t) &= \gamma \exp \left\{ -\left(\frac{||\bg_k-\boldsymbol{l}_t||}{\tau} \right)^{b}\right\}, \quad \gamma=2,\ b=2,\ \tau=15,
\end{split}
\end{equation}
where $k=1,\ldots,K$, $\bg_k$ denotes the coordinates of the $k$th knot in the latent space and $\btheta_t(c_t)=(\theta_{1t}(c_t),\ldots, \theta_{Kt}(c_t))^\top$.
To keep the simulation realistic, the range of $\btheta_t$ is restrained to $[0, 2]$ for all times and the number of knots is set to be $K=8 \times 8=64$. The patterns of $\btheta_t$ are designed to evolve with the ENSO index values. That is, when the ENSO index $c_t$ reaches the maximum level, the center of the basis function will be located at the top-left corner, whereas at the minimum $c_t$, the center will be located at the bottom-right corner. Thus, by doing so, the center of the low tilting parameter values will move along the off-diagonal line. For the times when the ENSO index is neither large nor small, we call these ``neutral'' times, the center of basis function will wander around the middle of the map. For example, the simulated $\btheta_t(c_t)$ map when $c_t = 0.859$ in 1982-12 is shown in Figure~\ref{fig:2_1}, corresponding to relatively high  El Ni\~{n}o occurrence. The neutral time is shown in Figure~\ref{fig:2_2} with $c_t = 0.482$, and the La Ni\~{n}a time is shown in Figure~\ref{fig:2_3} with $c_t = 0.118$.



Starting from the $\btheta_t(c_t)$ values, we use the simple rejection sampler introduced in \cite{bopp2021hierarchical} to sample the latent variables $Z_{kt}$ knot-wise, with fixed $\alpha=0.5$. We follow the model in~\eqref{eqn:model_Y} to simulate the low-rank representation with $\{w_k(\bs):k=1,\ldots, K\}$ specified using the Wendland basis functions with radii of 3. The noise process at the data level that follows the log-Laplace distribution is generated with $\alpha_0 = 30$. Based on the model introduced in Section~\ref{subsec:Conditional XVAE}, we are able to sample the data $\bX_t$ for $t = 1,\ldots,n_t$.

In fitting the model, the training process converged after approximately 1000 epochs, which took about 223.06 seconds. 

\begin{figure}[htbp]
    \centering
    \includegraphics[width=0.8\linewidth]{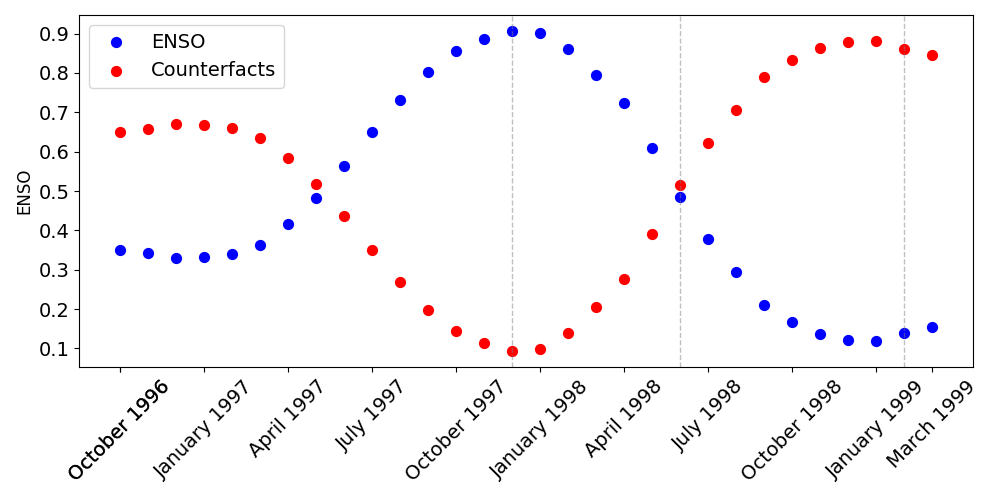}

    \vspace{0.1em}
    \includegraphics[width=0.75\linewidth]{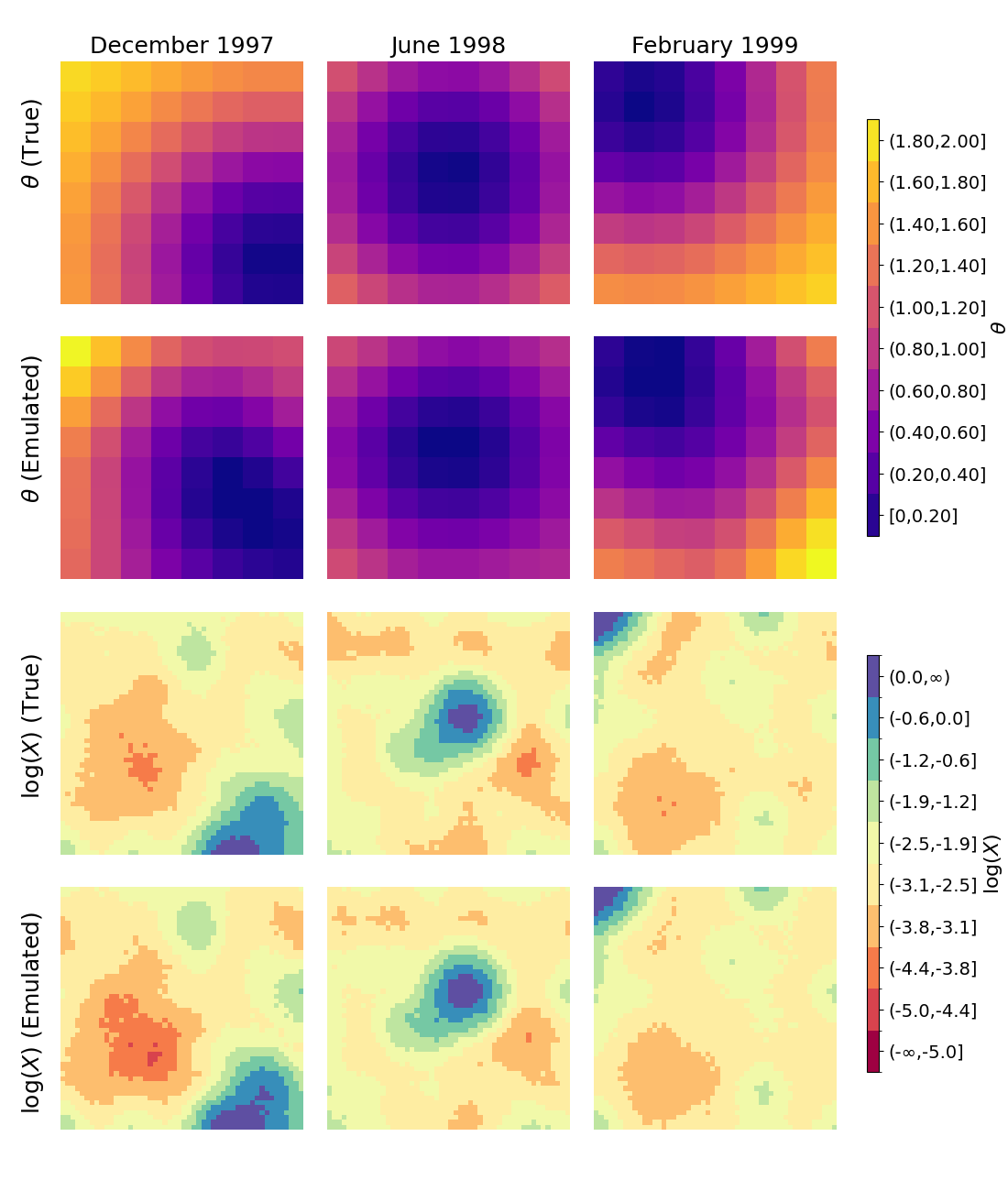}

    \caption{First row: ENSO indexes and counterfactual ENSO indexes (flipped) from October 1996 to March 1999 (the time window marked in the shades of Figure~\ref{fig:ENSO}). Second row: True $\btheta_t$ at 3 selected times. Third row: Estimated $\btheta_t$ at 3 selected times. Fourth row: True $\log(\bX_t)$ at 3 selected times. Fifth row: Emulated $\log(\bX_t)$ at 3 selected times.}
    \label{fig:sim_res}
\end{figure}

\subsection{Simulation results}

Figure~\ref{fig:sim_res} first presents a scatter plot of the ENSO index along with a synthetic ENSO index to demonstrate a counterfactual effect (see Section~\ref{sec:cf} below). The second row to the fifth row of Figure~\ref{fig:sim_res} compares the true tilting parameters $\btheta_t$ to their estimated values, and true process fields $\log (\bX_t)$ to their emulated values. These are compared at three representative time points, i.e., December 1997, June 1998, and February 1999, corresponding to an El Ni\~no, neutral, and La Ni\~na period, respectively.  Specifically, the second and third rows show the evolution of the latent variable $\btheta_t$, which governs the spatial variation in the strengths of extremal dependence over time. The fourth and fifth rows display the corresponding realizations of the physical field $\log (\bX_t)$.

The estimated $\btheta_t$ fields capture the large-scale spatial structure and smooth gradients evident in the true $\btheta_t$ fields at all time points. Note that the estimated $\btheta_t$ preserves the central low-intensity region which gradually moves from the lower right corner to the upper left, reflecting that the model successfully learned the tail heaviness of the underlying latent process. Although the estimated $\btheta_t$ is not perfectly shaped like the truth, the overall spatial patterns and intensities remain consistent with the truth.

For the physical field $\log (\bX_t)$, the emulated fields closely match the real ones, particularly in terms of spatial clustering throughout the spatial domain. The emulated maps reproduce the location of the clusters of high and low values as well as the variability in the true data, indicating the model's success in reconstructing the physical outputs from the learned latent structure via the decoder. 

Since the emulated $\btheta_t$ fields do not perfectly recover the truth, it is important to emphasize the difficulty of this task. In our results, each $\btheta_t$ estimation and physical field are constructed from only three pseudo-replicates per time point, indicating a significant challenge due to the limited available information. Typically, extremal parameter estimation is performed under stringent parametric assumptions, where time-varying parameters are modeled through functions governed by a very small set of range or scale parameters. Moreover, such estimation tasks often require much larger sample sizes due to the rarity and instability of extremes.  In contrast, our approach tackles the much more challenging problem that the amount of information is extremely insufficient to fully recover the underlying signal with precision.


In the past, most spatial extremes models assumed a single dependence parameter over the space to characterized the spatial dependence structure. For instance, within the domain of max-stable processes models, the exponent function is the key to evaluate the extremal dependence  \citep[see Section 3 in][]{huser2022advances}, which are usually simply parameterized. Similarly, the dependence class of the \citet{huser2019modeling} model is governed by a single parameter over the entire spatial domain. There are recent works where the extremal dependence parameter is allowed to be the spatially-varying similarly through low-dimensional basis representation \citep[e.g.,][]{shi2024spatial}, but their inference through traditional Bayesian MCMC is computationally prohibitive and resource intensive. Also, these works still do not allow the dependence structure to vary across time.  

Despite this challenge, our emulator is able to recover smooth latent fields and generate realistic physical output. In particular, the comparison in Figure~\ref{fig:sim_res} highlights the model's capacity to generate spatial extremes while maintaining fidelity to the smooth latent dynamics. Notably, this is achieved \textit{without assuming temporal stationarity}, further demonstrating the flexibility and robustness of the approach. Overall, the visual results further reinforce the emulator's effectiveness in preserving the spatial and temporal dynamics of extremes.

\begin{figure}[htbp]

    \begin{subfigure}[b]{0.32\linewidth}
        \centering
        \includegraphics[width=\linewidth]{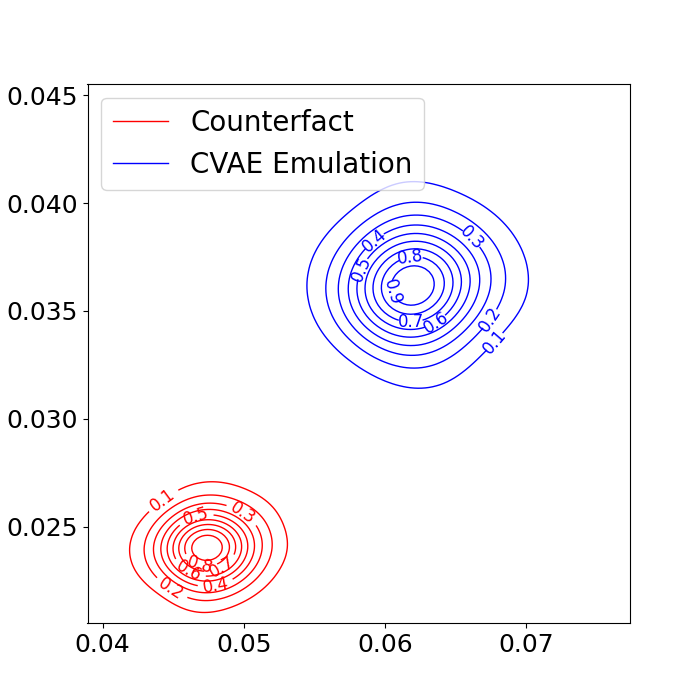}
    \end{subfigure}
    \hfill
    \begin{subfigure}[b]{0.32\linewidth}
        \centering
        \includegraphics[width=\linewidth]{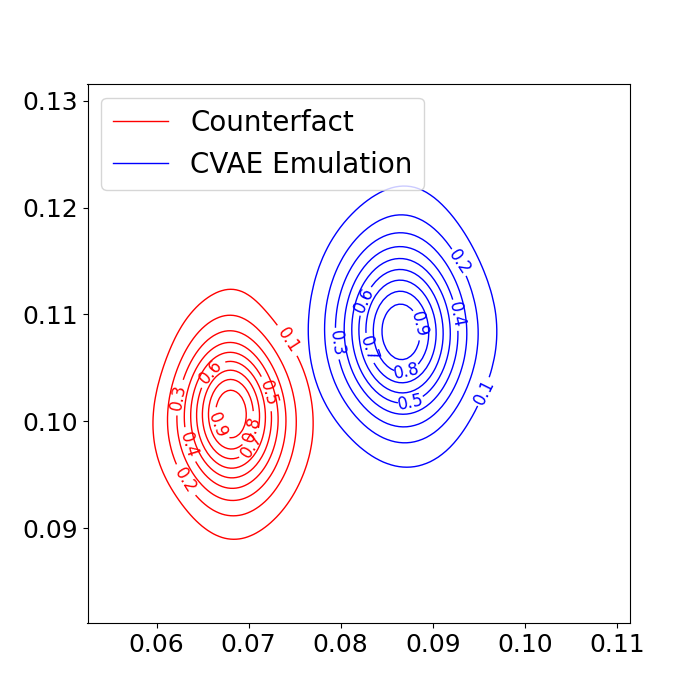}
    \end{subfigure}
    \hfill
    \begin{subfigure}[b]{0.32\linewidth}
        \centering
        \includegraphics[width=\linewidth]{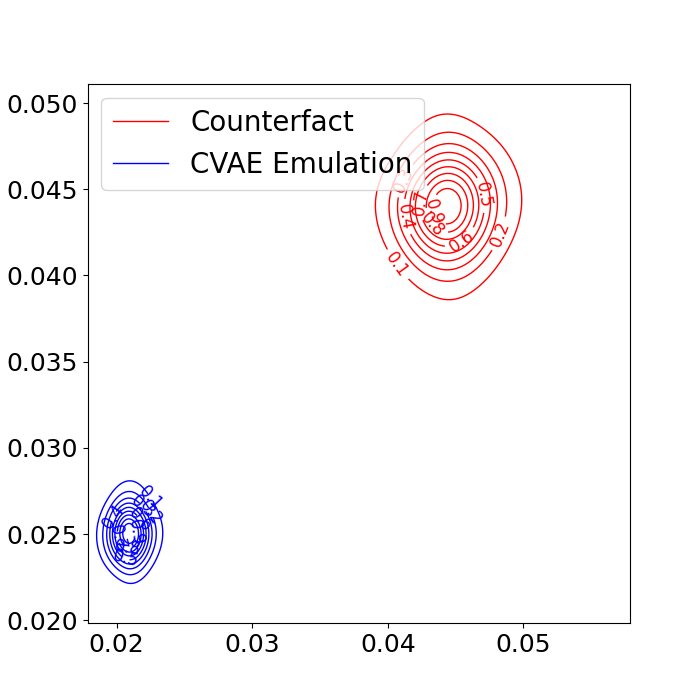}
    \end{subfigure}

    \caption{Kernel density contour plots of emulated samples at two selected spatial locations under original and counterfactual ENSO conditions. Each panel corresponds to a different time: December 1997 (left),  June 1998 (middle), and  February 1999 (right). The counterfactual ENSO signal induces clear differences in the distributions, particularly in December 1997 and February 1999, where clear deviations between the counterfactual (red) and emulation (blue) contours are observed.}
    \label{fig:sim_contour}
\end{figure}

\subsection{Counterfactual Experiment}\label{sec:cf}
To investigate the impact of pseudo-natural climate conditions on dependence parameters and reconstructed data, we manipulate the ENSO index in ``a world that might have been.'' That is, we flip its sign as illustrated in the scatter plot in Figure~\ref{fig:sim_res} and pass it through the trained conditional XVAE model. So, using the synthetic ENSO index, we can generate the counterfactual estimation of the reconstructed $\bX_t$. To better demonstrate the effect of these counterfactuals, we arbitrarily select two locations within the spatial domain and generate 500 emulated samples for each location. Then we plot the kernel density estimation of those samples in December 1997, June 1998 and February 1999, as shown in Figure~\ref{fig:sim_contour}. If the counterfactual ENSO conditions have a notable impact, we expect to observe distinct differences between the resulting contour plots. Indeed, for December 1997 and February 1999, the differences between the original and counterfactual ENSO indexes are clear. Then, we observe substantial discrepancies between the contours in the left and right panels of Figure~\ref{fig:sim_contour}. Furthermore, the direction of the contour deviations aligns with the way in which ENSO changes, indicating that the counterfactuals correctly capture an inverted temporal evolution consistent with the manipulated ENSO signal. These systematic shifts illustrate that the model does not merely reproduce observation-level changes but adjusts the latent representation in response to the altered input conditions. In this way, the model also functions as a diagnostic tool for assessing whether a conditioning variable acts as an important confounder or predictor: if a condition were uninformative, flipping its value would yield minor changes in the reconstructed distributions. In contrast, the substantial and structured differences observed here provide strong evidence that ENSO acts as the primary driver of the extremal dependence patterns by construction in this simulation experiment.

Overall, this experiment confirms that our model successfully internalizes the influence of climate conditions and can meaningfully interpolate to counterfactual climate scenarios. At the same time, the analysis validates the relevance of the conditioning variable itself within the modeling framework.

\subsection{Comparison in \texorpdfstring{$\chi$}{chi}-coefficient}
Figure~\ref{fig:sim_chi} displays the estimated $\chi$-coefficient for three representative spatial lags: short range (distance = 0.5), medium range (distance = 3) and long range (distance = 7). As mentioned in Section \ref{subsec:metric}, this $\chi$-coefficient assesses the extremal dependence between spatial locations by quantifying the probability that one site exceeds a high threshold conditional on another site also exceeding that threshold. Therefore, it serves as a valuable diagnostic tool for comparing the accuracy of the emulated fields against the true data to capture the joint extreme behavior. 

\begin{figure}[!htbp]
    \centering
    \includegraphics[width=0.85\textwidth]{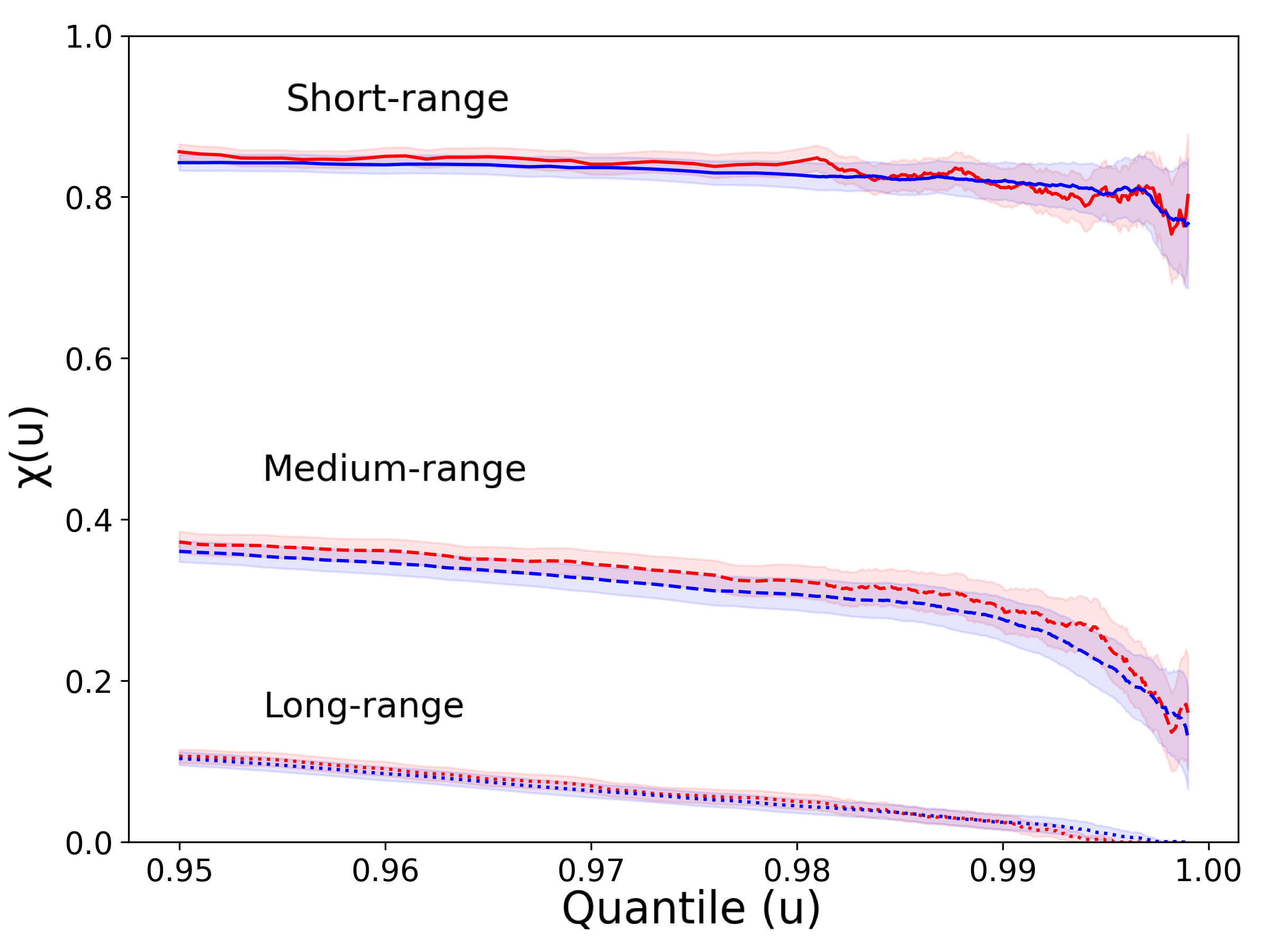}

    \caption{$\chi$-coefficients for short (distance 0.5), medium (distance 3) and long (distance 6) spatial lags. The emulated curves (blue) closely match the true data (red), capturing both strong short-range and weak long-range extremal dependence.}
    \label{fig:sim_chi}
\end{figure}

Across all distances, the emulated $\chi$ curves closely follow those of the true data. The $\chi$ value is high for the short-range pairs, which reflects the strong local dependence. The value of $\chi$ decreases rapidly as the distance increases, consistent with the trend of natural spatial dependence. Importantly, emulation maintains both the trend and the magnitude of $\chi$ on different spatial scales, including capturing the rapid drop in dependence at higher quantiles. The overlap of the 95\% confidence intervals between the truth and emulator further supports the model's ability to reproduce spatial extremal dependence with high accuracy.
Although some minor discrepancies appear at higher quantile levels and for larger distances, they remain close overall. Moreover, the 95\% confidence intervals largely overlap, indicating that the emulator still captures the weak dependence structure with reasonable accuracy.

\begin{figure}[!htbp]
    \centering
    \includegraphics[width=0.65\textwidth]{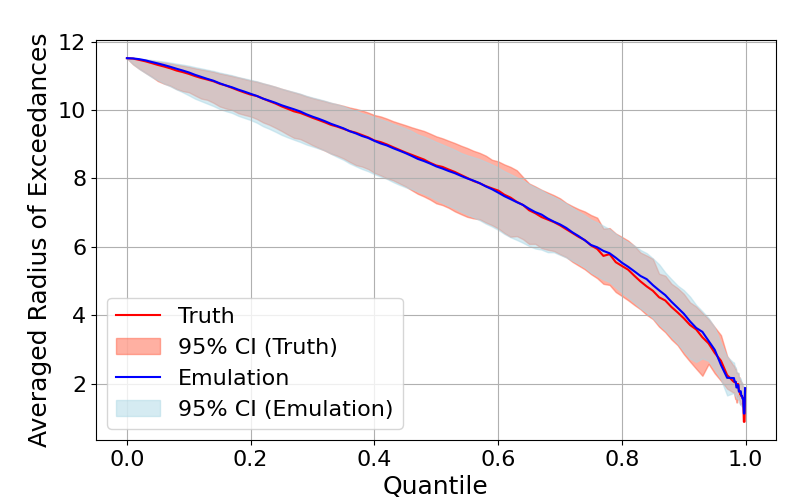}

    \caption{Comparison of ARE between ground truth data and emulated samples across quantile thresholds from 0 to 1. The ARE decreases with increasing quantiles, reflecting smaller spatial extents of extreme events. The emulated ARE closely follows the truth, with overlapping 95\% confidence intervals, indicating that the emulator accurately captures both spatial dependence and uncertainty.}
    \label{fig:sim_ARE}
\end{figure}

\subsection{Comparison in Averaged Radius of Exceedances (ARE)}
Figure~\ref{fig:sim_ARE} presents ARE curves across quantile thresholds ranging from 0 to 1, comparing the results from ground truth data with emulated samples from our emulator. As expected, ARE decreases monotonically with increasing quantile levels, which shows the decreasing spatial extent of extreme events.

The emulator exhibits a strong ability to capture the spatial dependence structure of extremes. The ARE curve generated from the emulated samples closely follows that of the truth across all quantile levels, with a negligible discrepancy. Moreover, the 95\% confidence intervals for both truth and emulation largely overlap, demonstrating that the emulator not only reproduces the expected spatial extent of extreme events but also accurately reflects the associated uncertainty.

The comparison of the ARE curves in Figure~\ref{fig:sim_ARE} supports the effectiveness of the emulator in reproducing the spatial extremes along with the bulk patterns of the spatial field. It also accurately reflects the main trend of the exceeded events, as well as the variability/uncertainty of the spatial dependence.

\begin{figure}[htbp]
    \centering
    \begin{subfigure}{0.38\textwidth}
        \includegraphics[width=\linewidth]{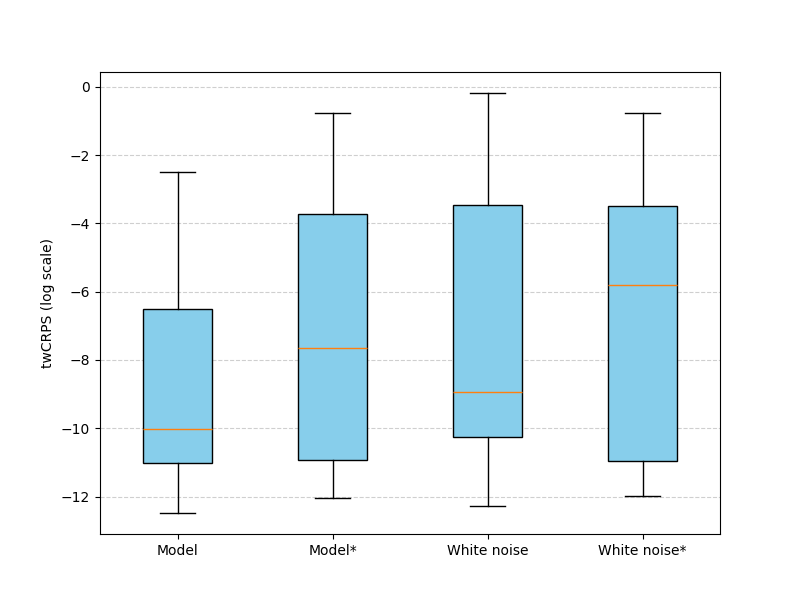}
    \end{subfigure}
    \hfill
    \begin{subfigure}{0.29\textwidth}
        \includegraphics[width=\linewidth]{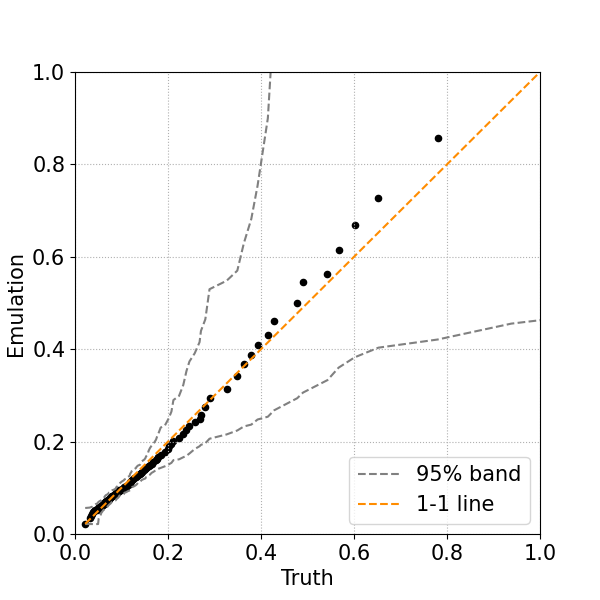}
    \end{subfigure}
    \hfill
    \begin{subfigure}{0.29\textwidth}
        \includegraphics[width=\linewidth]{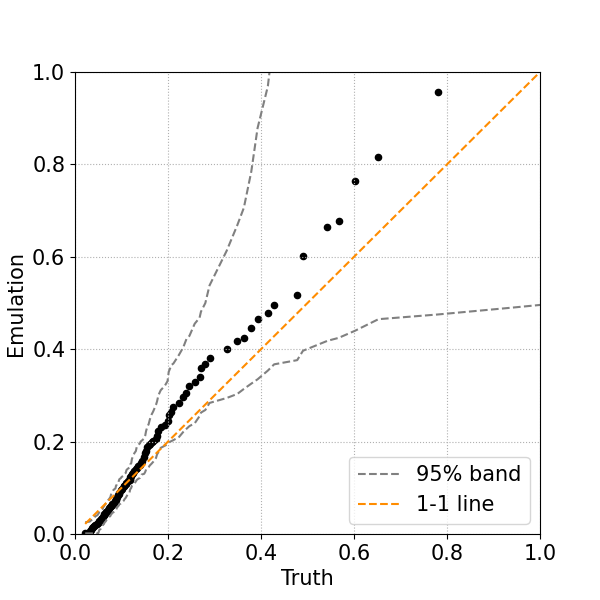}
    \end{subfigure}

    \caption{Comparison of emulator performance against truth data using tail-weighted CRPS plot and Q-Q plot. The left panel presents the boxplots of the CRPS across the holdout locations on log scale, summarizing the overall accuracy and uncertainty of the emulated fields. The first and third boxplots represent the CRPS when applying ENSO condition and white noise condition. The second and fourth boxplots represent the same comparison with fixed $\boldsymbol{W}$ in~\eqref{eqn:model_Y}. The middle panel shows the Q-Q plot between truth and emulations at a representative location, with the 1:1 line (red), indicating how well the emulator reproduces the marginal distribution of extremes. The right panel shows the Q-Q plot between truth and emulations generated with white noise conditions.}
    \label{fig:sim_CRPS}
\end{figure}
\subsection{Tail-weighted CRPS and Q-Q plot} Figure~\ref{fig:sim_CRPS} evaluates the performance of the proposed model using the tail-weighted CRPS and Q-Q plots. For the tail-weighted CRPS metric, lower values indicate better agreement between the emulated field and the truth. In the left panel, the CRPS values in the first boxplot are computed across the holdout locations using the model results and remain low among the four violins, demonstrating the high reconstruction accuracy of our model, particularly over the upper quantiles. Comparing the first and third boxplots, when the ENSO index in the model is replaced with equally scaled white noise, the tail-weighted CRPS increases substantially, highlighting the crucial contribution of the ENSO index in this experiment. Similarly, comparing the first and second boxplots shows that allowing the basis function (weight) matrix $\boldsymbol{W}$ to be learnable, rather than fixed, improves the CRPS, underscoring the necessity of adaptive spatial weighting.

In the middle panel, the Q-Q plot shows samples from a randomly selected location, with points closely following the 1:1 line, indicating that our model accurately reproduces the underlying distribution. Minor deviations are observed in the tails, where extreme values may be slightly overestimated, but these remain within an acceptable range. In contrast, the right panel shows that when samples are emulated under white noise conditions, both the bulk and tail values are clearly overestimated, again confirming the importance of incorporating ENSO conditions. Overall, both evaluations prove that the proposed model not only captures the central trend and extremes at individual locations but also provides reliable emulations across the spatial domain.


\section{Fire Weather Index Data Analysis}\label{sec:real data}

The ignition and spread of wildfires can lead to severe losses for both society and ecosystems. For example, the 2019-2020 bushfire season in Australia, often referred to as the ``Black Summer,'' burned more than 24.3 million acres of land. This catastrophic event resulted in the loss of lives, the destruction of homes, and significant damage to biodiversity and ecosystems. To investigate extremal dependence patterns in both spatial and temporal dimensions, we consider the Fire Weather Index (FWI) to access drought and fire behaviors in Australia.

\subsection{FWI Data in Australia}\label{subsec:real_data}

The FWI is part of the Canadian Forest Fire Weather Index System \citep{vanwagner1987fwi}, which evaluates the effects of weather conditions on forest floor fuel moisture. The FWI is derived from several baseline indices such as the fine fuel moisture code, the Duff moisture code, and the drought code. These baseline indices are just functions of some key weather parameters, such as temperature, relative humidity, wind speed, and precipitation \citep{dowdy2009australian}. The FWI data can be downloaded from the Global Fire Weather Database, maintained by the Goddard Earth Observing System (GEOS). This FWI dataset provides observations globally from in-land sensors and is accessible through the NASA Center for Climate Simulation (NCCS) Data Portal at  \href{https://portal.nccs.nasa.gov/datashare/GlobalFWI/}{https://portal.nccs.nasa.gov/datashare/GlobalFWI/}.

To avoid missing values in the original dataset, we extract FWI data from 1,118 locations within a target grid spanning 143.125$^{\circ}$E to 150.9375$^{\circ}$E  longitude and 33.75$^{\circ}$S to 23.25$^{\circ}$S latitude, covering the inland and coastal areas of Queensland and northern New South Wales. The grid has a resolution of 0.3125$^{\circ} \times$ 0.25$^{\circ}$. The data cover the time period from May 1, 2014 to November 30, 2024. We remove seasonality by subtracting the overall trend estimated through cubic splines, with the help of \texttt{R} package \textit{mgcv} \citep{mgcv}. At each location, we extract the monthly maxima from the detrended data, resulting in a total of 127 monthly maxima across the dataset. We then fit a GEV distribution location-wise to verify if the marginal distributions follow the GEV form. Before applying our conditional XVAE model, we use the monotonic transformations with the GEV parameters estimated for each location. The details of the data preparation procedures can be found in Appendix~\ref{App:FWI}.

\subsection{Results}\label{subsec:real_result}

The training process converged after approximately 600 epochs. The total time training time took about 981.10 seconds, after which the generation of emulated samples proceeds with minimal computational resources. That is this is an amortized inference setting where there is a reasonably high training cost but subsequent generation of emulated samples proceeds very quickly. The tuning parameters in the model were carefully selected using a cross-validation approach. 


For the domain of interest, we focus on three key seasonal snapshots: late autumn (November 2023), late spring (April 2024), and middle autumn (October 2024), corresponding to El Ni\~no, neutral, and La Ni\~na conditions, respectively. The ENSO index from April 2023 to November 2024 along with the counterfactual ENSO (flipped) are shown in the first row of Figure~\ref{fig:real_res}.  The estimated $\btheta_t$ maps are shown in the second row of Figure~\ref{fig:real_res} and reflect shifts in the heaviness of the tail of the underlying distribution, where lower $\btheta_t$ values imply a higher probability of extreme fire weather conditions. For example, the map of October 2024 indicates a particularly heavy-tailed distribution over northern New South Wales, suggesting elevated extreme event potential heading into the winter fire season.

\begin{figure}[htbp]
    \centering
    \includegraphics[width=0.75\linewidth]{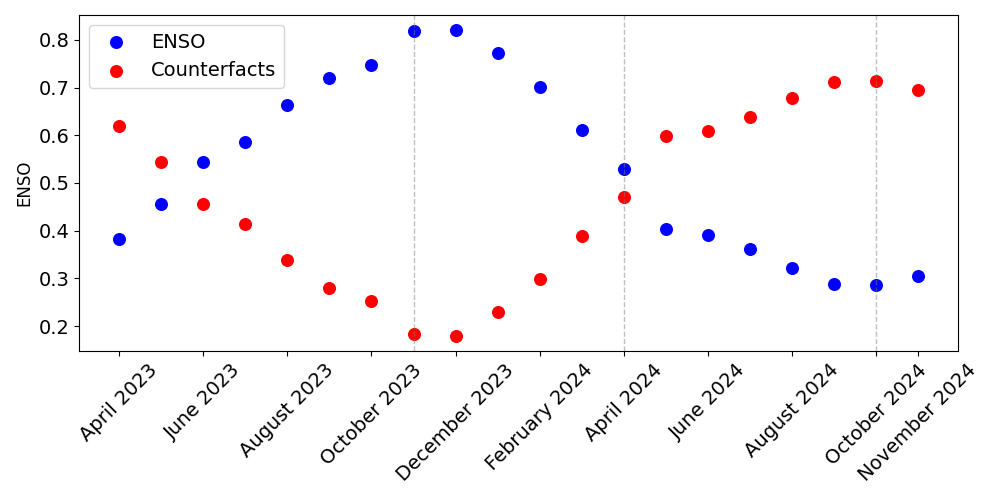}

    \vspace{0.1em}
    \includegraphics[width=0.8\linewidth]{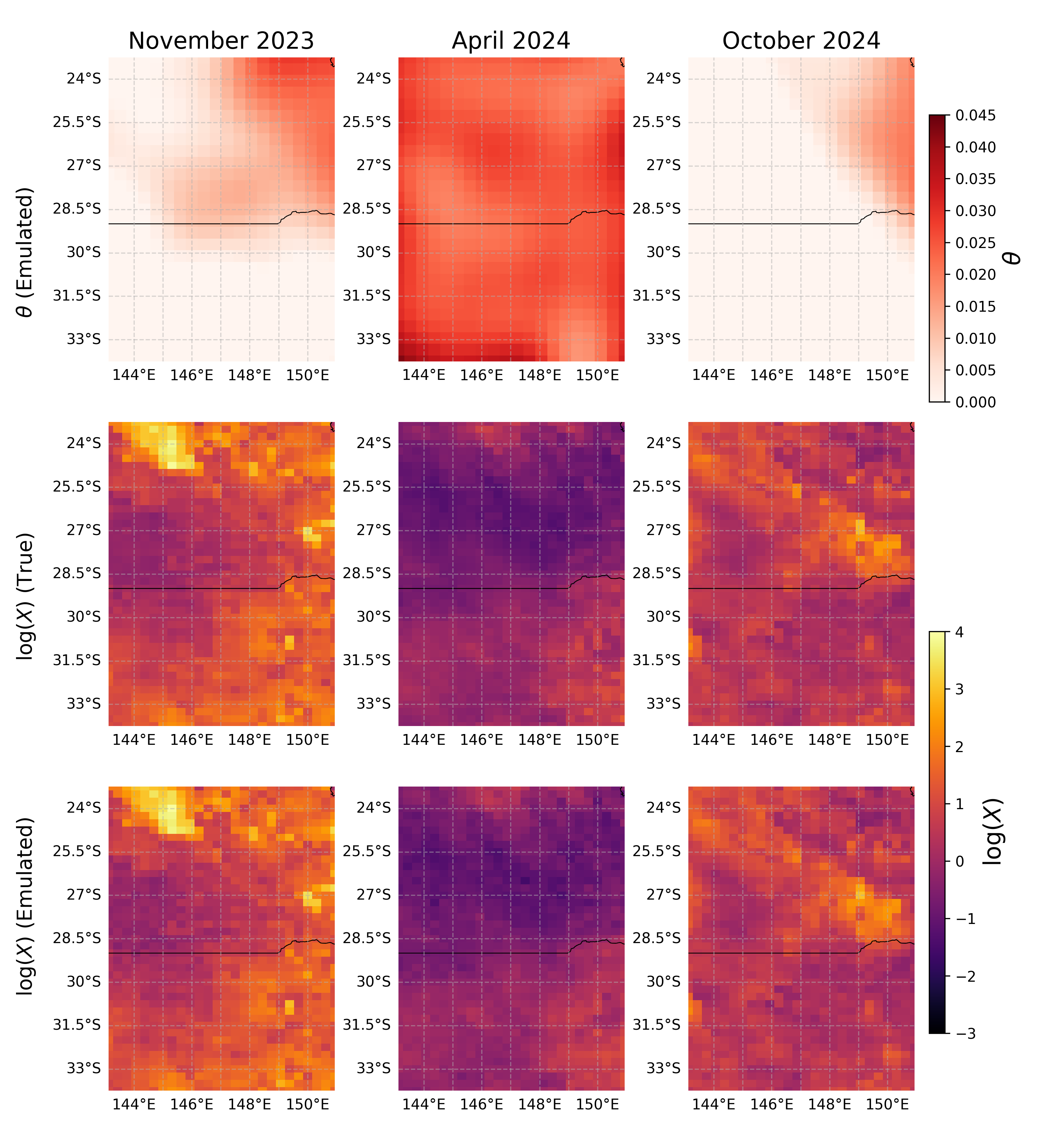}

    \caption{First row: ENSO indexes (blue) and counterfactual ENSO indexes (red) from April 2023 to November 2024. Second row: Emulated $\theta$ at 3 selected times. Third row: True $\log(X)$ at 3 selected times. Fourth row: Emulated $\log(X)$ at 3 selected times.}
    \label{fig:real_res}
\end{figure}

The true and emulated FWI fields for these times are shown in the third and fourth rows in Figure~\ref{fig:real_res}. These suggest visually that the emulated fields generated by our model closely match the ground truth in all three time periods. Our model successfully captures both large-scale seasonal trends and localized high-risk hotspots. This close agreement demonstrates the emulator's capacity to generalize across time and space, even when driven by limited pseudo-replicated fields and low-dimensional climate covariates. Importantly, we note that this exploration of our model's emulated fields and estimated extremal dependence parameters made no assumption of stationarity in space or time. 


\begin{figure}[htbp]

    \begin{subfigure}[b]{0.3\linewidth}
        \centering
        \includegraphics[width=\linewidth]{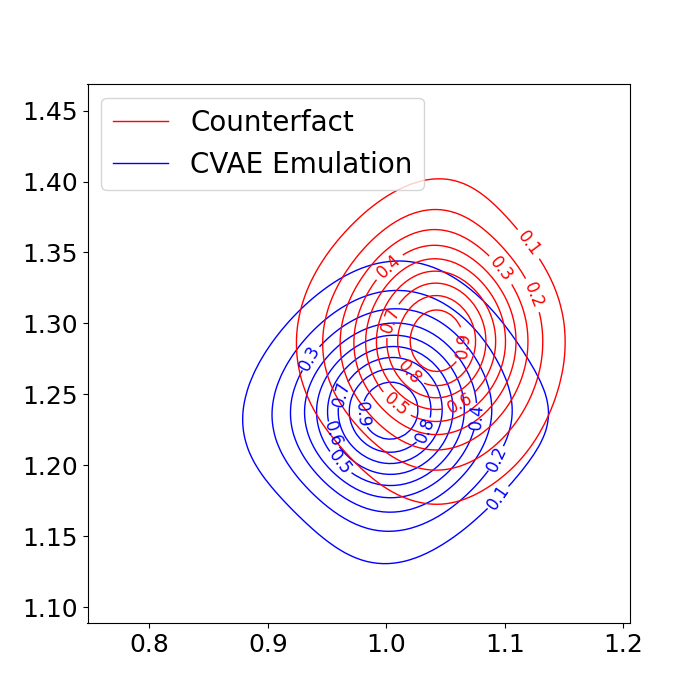}
    \end{subfigure}
    \hspace{0.01\linewidth}
    \begin{subfigure}[b]{0.3\linewidth}
        \centering
        \includegraphics[width=\linewidth]{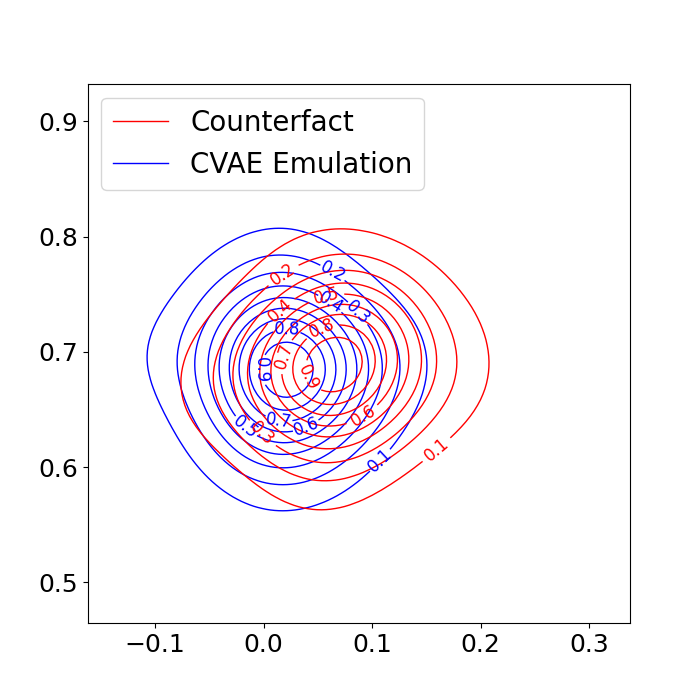}
    \end{subfigure}
    \hspace{0.01\linewidth}
    \begin{subfigure}[b]{0.3\linewidth}
        \centering
        \includegraphics[width=\linewidth]{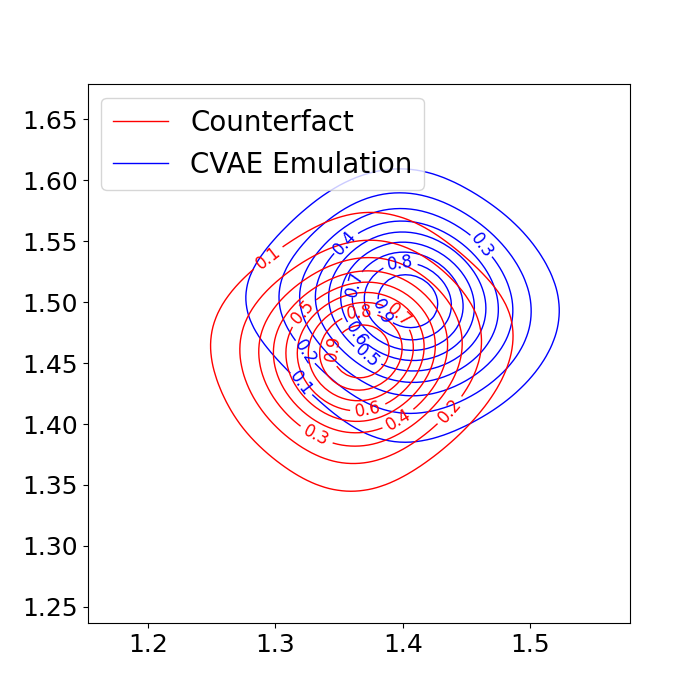}
    \end{subfigure}

    \caption{Kernel density contour plots of emulated samples at two selected spatial locations under original and counterfactual ENSO conditions. Each panel corresponds to a different time: November 2023 (left), April 2024 (middle) and October 2024 (right).}
    \label{fig:real_contour}
\end{figure}

\paragraph{Counterfactual Experiment:} As with our simulated example, we investigate how the model will react to a counterfactual ENSO time series (i.e., where it is flipped as described earlier).  Kernel density contour plots compare the distributions of the counterfactual emulations (red) versus the cXVAE emulation (blue) at the three representative times as shown in Figure~\ref{fig:real_contour}. Across all three times, the kernels of blue and red are relatively close. The time-varying structure of the counterfactual emulation is consistent with the manipulated ENSO signals; for example, the positions of the red and blue contours swap between November 2023 and October 2024 in line with the magnitude of the ENSO indices. Importantly, the minor differences between the blue and red contours indicate that the ENSO index is not a dominant predictive factor for modeling the FWI in eastern Australia. This further highlights our model’s capability to evaluate the contribution of climate drivers when modeling extremes.


Moreover, the counterfactual results provide a framework for exploring ``what-if" climate scenarios that are directly relevant for risk assessment and emergency planning. For instance, consider the case of eastern Australia during November 2023: although a low ENSO index would typically suggest reduced fire danger, the counterfactual contour plots indicate that the FWI could still reach elevated levels. This highlights the complex relationship between sea surface temperature anomalies and inland fire danger: Even when ENSO-related temperatures are cooler than usual, the region may still face a substantial risk of wildfires due, for example, to existing drought conditions.

\paragraph{Comparison in $\chi$-coefficient:}
Figure~\ref{fig:real_chi} displays the estimated conditional extremal dependence coefficient $\chi$ for three representative spatial lags: short range (distance = 2), medium range (distance = 6), and long range (distance = 10). Across all spatial lags, the emulated $\chi$ (blue curves) closely follows the true $\chi$ from the data (red curves). At short range, the $\chi$ values are relatively high, reflecting a moderate spatial dependence of the extremes. The CXAVE model captures this structure well, preserving the overall shape and the steep decline of $\chi$ as the quantile increases. At medium-range and long-range, the $\chi$ values decrease as expected, representing the decay of dependence with distance. The cXVAE model continues to track the trend with reasonable accuracy, along with acceptable underestimation or overestimation at some high quantile levels. The 95\% confidence intervals from the cXVAE model (shaded blue) and from the truth (shaded red) generally overlap, especially in the lower and intermediate quantile ranges. This overlap supports the emulator's ability to replicate the correct dependence structure.

\begin{figure}[htbp]
    \centering
\includegraphics[width=0.75\linewidth]{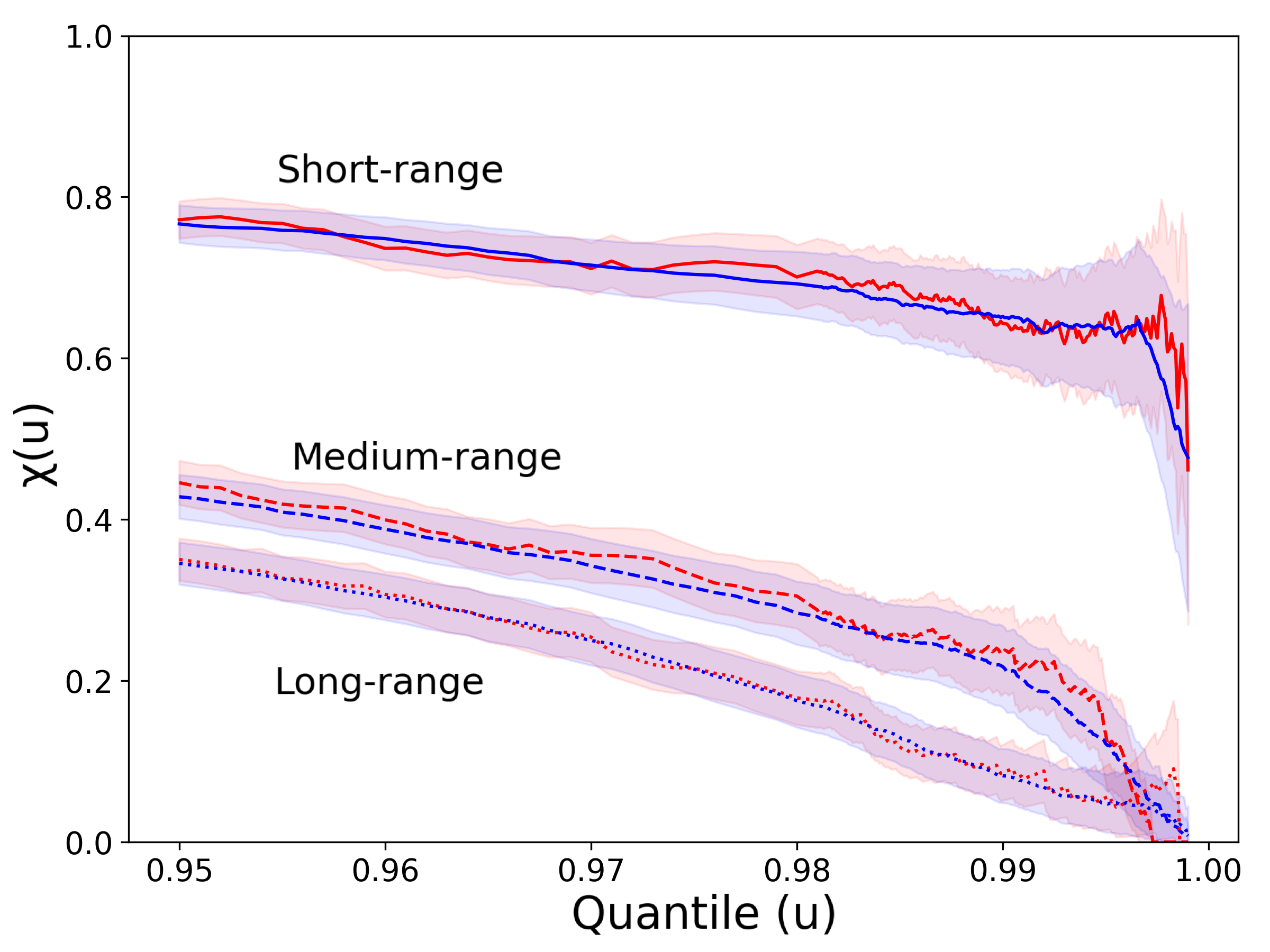}
    \caption{$\chi$-coefficients for short (distance 2), medium (distance 6) and long (distance 10) spatial lags. The emulated curves (blue) closely match the true data (red).}
    \label{fig:real_chi}
\end{figure}

\paragraph{Comparison in Averaged Radius of Exceedances (ARE):}
Figure~\ref{fig:real_ARE} presents the ARE curves in a sequence of quantile thresholds, comparing the spatial extent of extreme events in the observed (truth) FWI data with those generated by our model. The cXVAE model demonstrates strong performance in replicating the spatial extent of extremes. The ARE curves calculated from the emulated samples closely track those from the observed data across nearly the entire quantile range. Moreover, the 95\% confidence intervals for the emulated data largely overlap with those of the observed data, indicating that the CXAVE model not only captures the mean spatial extent of extreme events, but also reproduces the associated variability with high accuracy.

\begin{figure}[!t]
    \centering
    \includegraphics[width=0.7\textwidth]{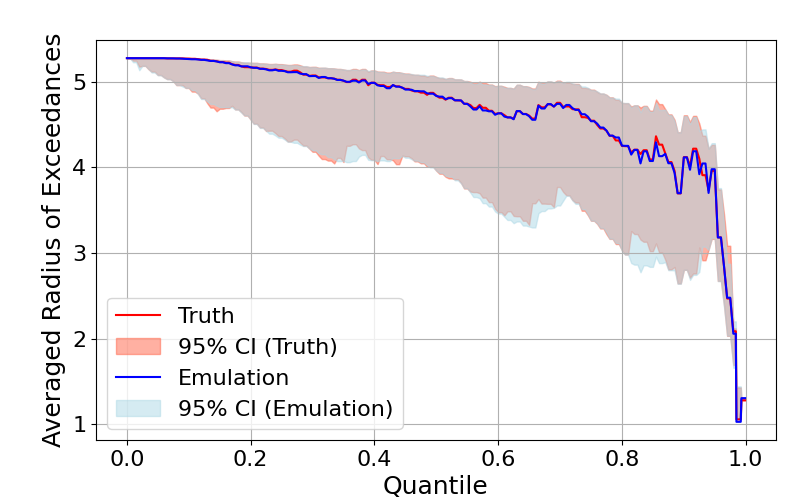}

    \caption{Comparison of ARE between the observed FWI and emulated FWI samples across quantile thresholds from 0 to 1.}
    \label{fig:real_ARE}
\end{figure}

\paragraph{Tail-weighted CRPS and Q-Q plot:} The results in Figure~\ref{fig:real_CRPS} demonstrate that the cXVAE model provides accurate and reliable reconstructions for the majority of cases. In the left panel, the tail-weighted CRPS distribution exhibits low variability, and the low CRPS values suggest sound model performance across the dataset. Across all four boxplots, our model slightly outperforms the alternatives, which is consistent with the limited influence of ENSO conditions in this experiment. In the middle panel, the Q–Q plot shows that our model closely reproduces the true FWI across all quantiles. In the right panel, the Q–Q plot based on white-noise conditions remains highly consistent with the truth, providing further evidence that ENSO contributes only modestly in this setting. Taken together, both diagnostics confirm that the proposed model provides an effective framework for generating emulated samples, and moreover, for identifying the relevance of climate conditions in modeling spatial extremes.

\begin{figure}[htbp]
    \centering
    \begin{subfigure}{0.38\textwidth}
        \includegraphics[width=\linewidth]{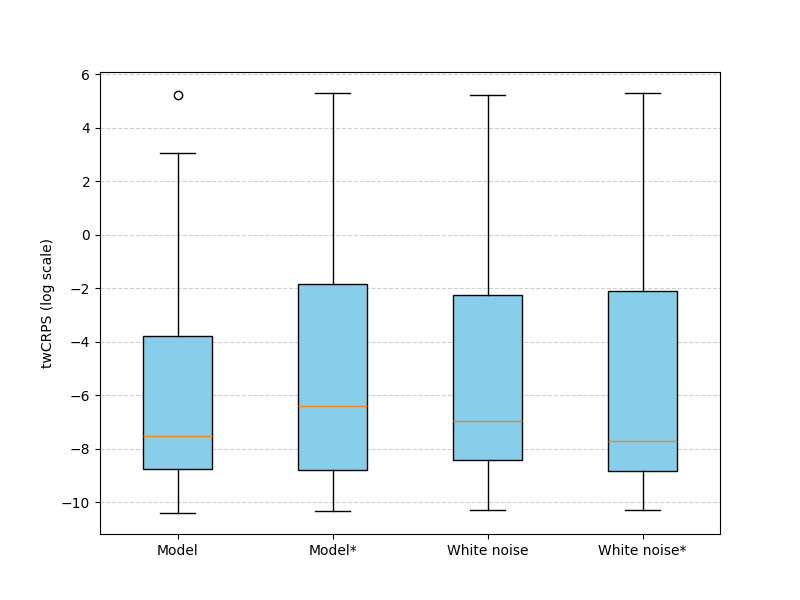}
    \end{subfigure}
    \hfill
    \begin{subfigure}{0.29\textwidth}
        \includegraphics[width=\linewidth]{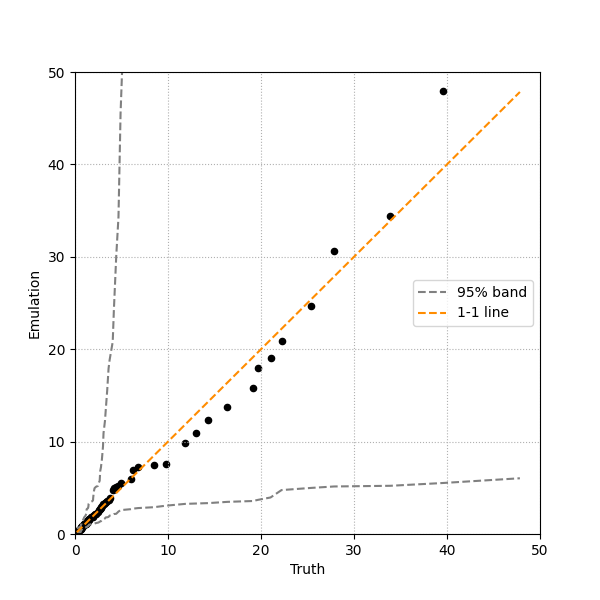}
    \end{subfigure}
    \hfill
    \begin{subfigure}{0.29\textwidth}
        \includegraphics[width=\linewidth]{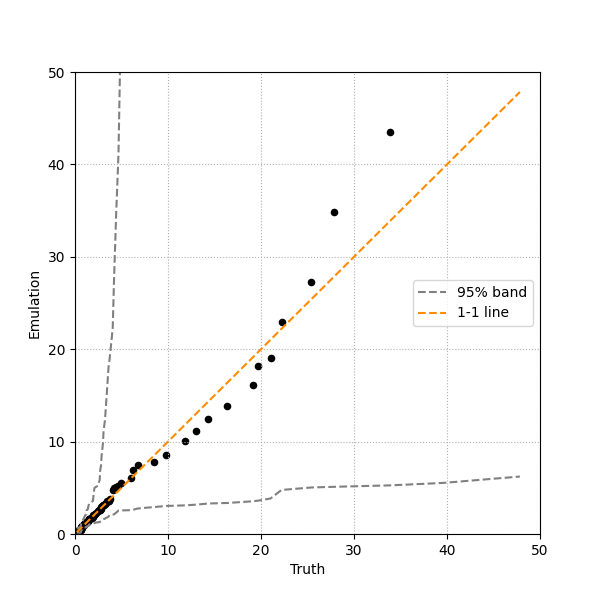}
    \end{subfigure}

    \caption{Comparison of emulated FWI sample against observed FWI using tail-weighted CRPS plot and Q-Q plot. The left panel presents the boxplots of the CRPS across the holdout locations on log scale. The first and third boxplots represent the CRPS when applying ENSO condition and white noise condition. The second and fourth boxplots represent the same comparison with fixed $\boldsymbol{W}$ in~\eqref{eqn:model_Y}. The middle panel shows the Q-Q plot between true FWI and emulated FWI sample at a representative location, with the 1:1 line (red), indicating how well the emulator reproduces the marginal distribution of extremes. The right panel shows the Q-Q plot between true FWI and emulated FWI sample generated with white noise conditions.}
    \label{fig:real_CRPS}
\end{figure}

Overall, the results presented here suggest that the cXVAE model can model real-world time-varying extremal dependence in the spatial domain. These results demonstrate that the model effectively preserves both the spatial scale and the uncertainty of extreme events. In addition, the emulator is capable of supporting downstream applications such as climate impact assessments, hazard risk mapping, and predictive fire weather forecasting given its ability to use counterfactual scenarios to evaluate the importance of conditioning time series on the generative ability of the model.


\section{Discussion}\label{sec:discussion}

This study introduces a cXVAE model that integrates climate drivers into a deep generative framework for spatio-temporal extremes. By allowing the latent extremal-dependence parameters to change along with the climate conditions, the model moves beyond stationarity assumptions and provides a flexible tool for reconstructing extreme events under different climate states. 

The cXVAE model supports counterfactual experiments, allowing researchers to examine how extreme events might change under altered climate signals, while require only modest computational resources. In addition, by comparing emulations generated with and without conditioning variables, the model offers a diagnostic for assessing the relevance of climate drivers in explaining extremes.

One limitation of the current work arises from enforcing temporal continuity in the model by adding a penalty term that encourages smooth evolution across neighboring time points. This regularization reflects the belief that, in many real-world systems, changes are typically gradual rather than abrupt, so large jumps in the latent dynamics are unlikely. In the current implementation, we apply this penalty uniformly over all time steps, which simplifies optimization but may be suboptimal when the underlying process exhibits nonstationary behavior. For example, during rapid regime shifts a uniform penalty may overly constrain the model, while in quiet periods it may be unnecessarily strong. A more flexible alternative would allow the penalty strength to vary over time---for instance, by first estimating a ``velocity'' or rate-of-change metric from the data and using it to construct a time-varying regularization schedule. Although we did not pursue such adaptive schemes here, we view them, along with other approaches (e.g., within block-independence frameworks), as promising directions for future work on modeling temporal continuity.

Several avenues exist for extending the cXVAE. In real applications, climate and environmental variables often arise from disparate measurement systems with non-consistent spatial grids, irregular sampling, or multi-resolution structure. Accommodating such spatial misalignment would allow the cXVAE to fuse high-dimensional climate conditions. Extending the model architecture to incorporate multi-resolution or multi-index spatial representations would significantly broaden its applicability to real-world problems where consistent spatial domains cannot be assumed. These developments would enhance the practical utility of the cXVAE and broaden its relevance for real-world spatial extreme analysis.

\bibliographystyle{apalike}
\bibliography{bib.bib} 

\newpage
\appendix
\setcounter{figure}{0} 
\renewcommand{\thefigure}{A.\arabic{figure}} 

\section{Log-Laplace measurement error}\label{App:log-Laplace}
As we understand from Expression~\eqref{eqn:frechet}, the tail decay rate of the Fr\'{e}chet$(0,\tau,\alpha_0)$ distribution is Pareto-like (i.e., regularly varying):
\begin{equation*}
    \mathbb{P}\{\epsilon(\bs) > x\}
    = 1 - \exp\left\{-\left(\frac{\tau}{x}\right)^{\alpha_0}\right\}\sim \tau^{\alpha_0}\,x^{-\alpha_0},
    \qquad x \to \infty.
\end{equation*}
In this subsection, our objective is to devise a flexible error model that mirrors this tail decay rate while being concentrated around 1 to mimic the standard normal error in additive models; see Figure~\ref{fig:app_loglaplace}. Consider $U \sim \text{Laplace}(0,1/\alpha_0)$, 
with the distribution function,
\begin{align*}
    \mathbb{P}(U \le u) =
\begin{cases}
\frac{1}{2}\exp\!\left(\alpha_0{u}\right), & u < 0, \\
1 - \frac{1}{2}\exp\!\left(-\alpha_0{u}\right), & u \ge 0.
\end{cases}
\end{align*}

Then a Log-Laplace$(0, 1/\alpha_0)$ variable can be constructed by defining $\epsilon = e^U$, whose distribution function is:
\begin{align*}
    \mathbb{P}(\epsilon\leq x)=     \begin{cases}
    \frac{1}{2}x^{\alpha_0}, \quad &0 < x \leq 1, \\
    1-\frac{1}{2}x^{-\alpha_0}, \quad &x > 1,
    \end{cases}
\end{align*}
Therefore the tail $\mathbb{P}(\epsilon> x)= \frac{1}{2}x^{-\alpha_0}$, whose decay is of the same order as Fr\'{e}chet$(0,\tau,\alpha_0)$.

\section{Technical derivations}\label{sec:proofs}
To show the tail equivalence under noise replacement, we use standard results from regular variation \citep[Karamata theory and Potter bounds; see, e.g.,][Proposition 0.8]{resnick2008extreme} and the dominated convergence theorem (DCT).

\begin{lemma}[Potter bounds for regularly varying tails]
\label{lem:potter}
Let $\bar F$ be a regularly varying tail with index $-\alpha_0<0$, i.e.
\begin{equation*}
\bar F(x) = x^{-\alpha_0} L(x), \qquad x>0,
\end{equation*}
where $L$ is slowly varying at infinity. Then:

\begin{enumerate}[(1)]
\item\label{exist} For every $\delta>0$, there exist $x_0>0$ and $C>0$ such that for all $x\ge x_0$ and all $t\geq 1$,
\begin{equation*}
C^{-1} t^{-\alpha_0-\delta}
\;\le\;
\frac{\bar F(tx)}{\bar F(x)}
\;\le\;
C\, t^{-\alpha_0+\delta}.
\end{equation*}

\item In particular, for every $\delta>0$ there exist $x_0>0$ and $C_\delta>0$ such that for all $x\ge x_0$ and all $y>0$,
\begin{equation*}
\frac{\bar F(x/y)}{\bar F(x)}
\;\le\;
C_\delta\bigl(y^{\alpha_0+\delta} + y^{\alpha_0-\delta}\bigr).
\end{equation*}
Then we can find another constant $C'_\delta>0$ such that
\begin{equation}\label{eq:potter-tail-ratio}
    \frac{\bar F(x/y)}{\bar F(x)}
\;\le\;
C'_\delta\bigl(1 + y^{\alpha_0+\delta}\bigr),\; \text{for all }y>0.
\end{equation}
\end{enumerate}
\end{lemma}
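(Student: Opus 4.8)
The plan is to obtain part (1) as the classical Potter bound for regularly varying functions and then deduce part (2) from it by an elementary case split in $y$. For part (1), write $\bar F(x)=x^{-\alpha_0}L(x)$ with $L$ slowly varying, so that $\bar F(tx)/\bar F(x)=t^{-\alpha_0}\,L(tx)/L(x)$ and it suffices to control $L(tx)/L(x)$. Using the Karamata representation $L(x)=c(x)\exp\!\bigl(\int_a^{x}\varepsilon(u)\,u^{-1}\,\mathrm{d}u\bigr)$, where $c(x)\to c\in(0,\infty)$ and $\varepsilon(u)\to 0$, I would pick $x_0$ large enough that $c(x)\in[c/2,2c]$ and $|\varepsilon(u)|\le\delta$ for all $x,u\ge x_0$; then for $t\ge1$ the interval $[x,tx]$ lies in $[x_0,\infty)$, so $\bigl|\int_x^{tx}\varepsilon(u)u^{-1}\,\mathrm{d}u\bigr|\le\delta\log t$, whence $\tfrac14\,t^{-\delta}\le L(tx)/L(x)\le 4\,t^{\delta}$ and therefore $4^{-1}t^{-\alpha_0-\delta}\le \bar F(tx)/\bar F(x)\le 4\,t^{-\alpha_0+\delta}$, i.e.\ the claim with $C=4$. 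This is exactly Proposition~0.8 of \citet{resnick2008extreme}, so one may alternatively cite it directly.

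For the first inequality of part (2), fix $\delta>0$ and take $x_0,C$ from part (1). For $x\ge x_0$ and $y>0$ I would distinguish three cases. (i) If $0<y\le 1$, then $x/y=(1/y)x$ with $1/y\ge1$, so part (1) gives $\bar F(x/y)/\bar F(x)\le C\,(1/y)^{-\alpha_0+\delta}=C\,y^{\alpha_0-\delta}$. (ii) If $y>1$ and $x/y\ge x_0$, apply the lower bound of part (1) with base point $x/y$ and multiplier $y$: $\bar F(x)/\bar F(x/y)=\bar F\bigl(y\cdot(x/y)\bigr)/\bar F(x/y)\ge C^{-1}y^{-\alpha_0-\delta}$, hence $\bar F(x/y)/\bar F(x)\le C\,y^{\alpha_0+\delta}$. (iii) If $y>1$ and $x/y< x_0$ (equivalently $x/x_0<y$), use $\bar F\le1$ to split $\bar F(x/y)/\bar F(x)=\bigl(\bar F(x/y)/\bar F(x_0)\bigr)\bigl(\bar F(x_0)/\bar F(x)\bigr)\le \bar F(x_0)^{-1}\cdot C\,(x/x_0)^{\alpha_0+\delta}\le C\,\bar F(x_0)^{-1}\,y^{\alpha_0+\delta}$, where the middle inequality is part (1) with base $x_0$ and multiplier $x/x_0\ge1$ and the last uses $x/x_0<y$. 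Setting $C_\delta=C/\bar F(x_0)$ (recall $\bar F(x_0)\le1$), the three cases combine to $\bar F(x/y)/\bar F(x)\le C_\delta\bigl(y^{\alpha_0+\delta}+y^{\alpha_0-\delta}\bigr)$ for all $x\ge x_0$ and $y>0$.

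For the second inequality~\eqref{eq:potter-tail-ratio}, I would first note a monotonicity in $\delta$: if the bound holds for some $\delta_1>0$, it holds with constant doubled for every $\delta_2>\delta_1$, since for $y<1$ both $1+y^{\alpha_0+\delta_1}$ and $1+y^{\alpha_0+\delta_2}$ lie in $[1,2]$, while for $y\ge1$ enlarging the exponent only increases the right-hand side. Hence one may assume $\delta<\alpha_0$, in which case $y^{\alpha_0-\delta}\le1$ for $y<1$ and $y^{\alpha_0-\delta}\le y^{\alpha_0+\delta}$ for $y\ge1$, so $y^{\alpha_0+\delta}+y^{\alpha_0-\delta}\le 2\bigl(1+y^{\alpha_0+\delta}\bigr)$ for every $y>0$; combining with the first inequality gives~\eqref{eq:potter-tail-ratio} with $C'_\delta=2C_\delta$.

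The only step beyond routine bookkeeping is case (iii): there the argument $x/y$ has fallen below the level $x_0$ at which the Potter bounds hold, so one cannot invoke part (1) at the point $x/y$ directly. The fix is to route the ratio through the fixed point $x_0$ and to absorb the resulting (a priori $x$-dependent) factor $(x/x_0)^{\alpha_0+\delta}$ using the inequality $x/x_0<y$ that defines this case; this is what makes the dominating bound uniform in $x\ge x_0$. Everything else --- the Karamata representation for part (1) and the elementary comparisons of powers of $y$ for part (2) --- is standard.
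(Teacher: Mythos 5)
Your proof is correct. Note that the paper itself does not really prove this lemma: part (1) is imported from the literature (Resnick's Proposition 0.8), and the one-term bound \eqref{eq:potter-tail-ratio} is merely asserted to ``follow directly'' from it. Your write-up supplies exactly the details that assertion suppresses. The Karamata-representation derivation of part (1) is the standard argument (so citing Resnick, as you note, would suffice), and your three-case derivation of part (2) is sound: cases (i) and (ii) are the routine applications of the upper and lower Potter bounds, while case (iii) --- where $x/y$ falls below $x_0$ so part (1) cannot be invoked at the point $x/y$ --- is handled correctly by bounding $\bar F(x/y)\le 1$, routing the ratio through the fixed point $x_0$, and absorbing $(x/x_0)^{\alpha_0+\delta}$ via the case-defining inequality $x/x_0<y$; this is precisely what makes the dominating bound uniform in $x\ge x_0$, which is the property actually needed for the DCT step in the proof of Theorem~\ref{thm:tail_equiv}. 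Your preliminary reduction to $\delta<\alpha_0$ is also necessary (for $\delta\ge\alpha_0$ the two-term bound does not dominate $1+y^{\alpha_0+\delta}$ as $y\to 0$) and is handled correctly by the monotonicity-in-$\delta$ observation. Two minor points you use implicitly and could state in half a sentence: $\bar F\le 1$ (it is a distribution tail) and $\bar F(x_0)>0$ (immediate since $L$ is positive), both of which are needed in case (iii).
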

The inequality in~\eqref{eq:potter-tail-ratio} follows directly from the first item in Lemma~\ref{lem:potter}. This bound provides the integrable domination needed to apply the DCT in the following proof.

\begin{figure}
    \centering
    \includegraphics[width=0.5\linewidth]{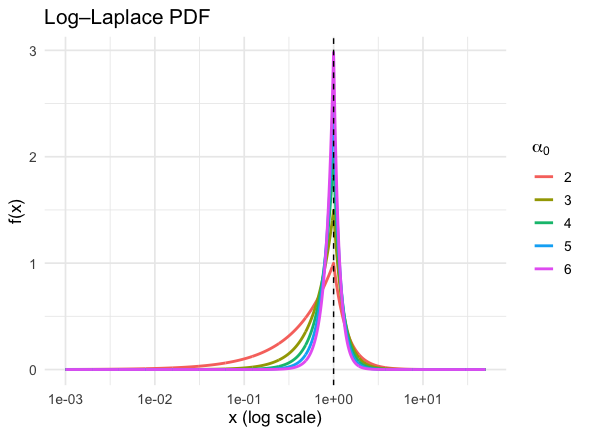}
    \caption{Log-laplace PDF under different $\alpha_0$: larger $\alpha_0$ gives us lighter tails.}
    \label{fig:app_loglaplace}
\end{figure}

\begin{proof}[Proof of Theorem~\ref{thm:tail_equiv}]


\medskip
First we look at the \textit{marginal} tail equivalence.
Fix $\bs\in\mathcal{S}$. For $X_F(\bs) = \epsilon_F(\bs)\,Y(\bs)$, independence of $\epsilon_F$ and $Y$ yields
\begin{equation*}
\mathbb{P}\{X_F(\bs)>x\}
= \mathbb{P}\{\epsilon_F(\bs) Y(\bs) > x\}
= \mathbb{E}\bigl[\mathbb{P}\{\epsilon_F(\bs) > x/Y(\bs)\mid Y(\bs)\}\bigr]
= \mathbb{E}\bigl[\bar F_{\epsilon_F}(x/Y(\bs))\bigr].
\end{equation*}
Thus,
\begin{equation*}
\frac{\mathbb{P}\{X_F(\bs)>x\}}{\bar F_{\epsilon_F}(x)}
= \mathbb{E}\!\left[
  \frac{\bar F_{\epsilon_F}(x/Y(\bs))}{\bar F_{\epsilon_F}(x)}
\right].
\end{equation*}

For each fixed $y>0$, by \eqref{eqn:tail_cond},
\begin{equation*}
\frac{\bar F_{\epsilon_F}(x/y)}{\bar F_{\epsilon_F}(x)} \rightarrow y^{\alpha_0},
\quad\text{as }x\to\infty.
\end{equation*}
Hence, pointwise in $Y(\bs)$,
\begin{equation*}
\frac{\bar F_{\epsilon_F}(x/Y(\bs))}{\bar F_{\epsilon_F}(x)}
\rightarrow Y(\bs)^{\alpha_0},\qquad \text{as }x\to\infty.
\end{equation*}

To apply the Dominated Convergence Theorem (DCT), we first choose $\delta\in(0,\eta)$ and apply the Potter bound \eqref{eq:potter-tail-ratio} for a certain constant $C'_\delta>0$:
for all sufficiently large $x$, 
\begin{equation*}
\frac{\bar F_{\epsilon_F}(x/Y(\bs))}{\bar F_{\epsilon_F}(x)}
\le C'_\delta\bigl(1+Y(\bs)^{\alpha_0+\delta}\bigr),
\end{equation*}
and our assumption $\mathbb{E}\{Y(\bs)^{\alpha_0+\eta}\}<\infty$ implies
$\mathbb{E}\{Y(\bs)^{\alpha_0+\delta}\}<\infty$, so the right-hand side is integrable. Therefore, by dominated convergence,
\begin{equation*}
\frac{\mathbb{P}\{X_F(\bs)>x\}}{\bar F_{\epsilon_F}(x)}
\rightarrow \mathbb{E}\{ Y(\bs)^{\alpha_0}\},\qquad \text{as }x\to\infty.
\end{equation*}
Since $\bar F_{\epsilon_F}(x)\sim c_F x^{-\alpha_0}$, we obtain
\begin{equation*}
\mathbb{P}\{X_F(\bs)>x\} \sim c_F\,\mathbb{E}\{Y(\bs)^{\alpha_0}\}\,x^{-\alpha_0},\qquad \text{as }x\to\infty.
\end{equation*}

Exactly the same argument applied to $X_L(\bs)=\epsilon_L(\bs)\,Y(\bs)$ with $\bar F_{\epsilon_L}(x)\sim c_Lx^{-\alpha_0}$ gives
\begin{equation*}
\mathbb{P}\{X_L(\bs)>x\} \sim c_L\,\mathbb{E}\{Y(\bs)^{\alpha_0}\}\,x^{-\alpha_0},\qquad \text{as }x\to\infty.
\end{equation*}
Hence,
\begin{equation*}
\frac{\bar F_{X_F(\bs)}(x)}{\bar F_{X_L(\bs)}(x)}
\rightarrow \frac{c_F}{c_L},\qquad \text{as }x\to\infty,
\end{equation*}
which is equivalent to
\begin{equation*}
\bar F_{X_F(\bs)}(x)\ \sim\ \frac{c_F}{c_L}\,\bar F_{X_L(\bs)}(x),\qquad \text{as }x\to\infty.
\end{equation*}

\medskip
Next we examine the bivariate \textit{joint} tail.
Now fix $\bs_1,\bs_2\in\mathcal{S}$, and we have
\begin{equation*}
X_F(\bs_i)=\epsilon_F(\bs_i)\,Y(\bs_i),\qquad i=1,2.
\end{equation*}
Condition on $(Y(\bs_1),Y(\bs_2))$, we have
\begin{equation*}
\begin{aligned}
\mathbb{P}\{X_F(\bs_1)>x,\ X_F(\bs_2)>x \mid Y(\bs_1),Y(\bs_2)\}
&= \mathbb{P}\{\epsilon_F(\bs_1)>\tfrac{x}{Y(\bs_1)},\ \epsilon_F(\bs_2)>\tfrac{x}{Y(\bs_2)}\mid Y\}\\
&= \mathbb{P}\{\epsilon_F>\tfrac{x}{Y(\bs_1)}\}\,\mathbb{P}\{\epsilon_F>\tfrac{x}{Y(\bs_2)}\}\\
&= \bar F_{\epsilon_F}\Bigl(\frac{x}{Y(\bs_1)}\Bigr)\,\bar F_{\epsilon_F}\Bigl(\frac{x}{Y(\bs_2)}\Bigr).
\end{aligned}
\end{equation*}
Therefore,
\begin{equation*}
\mathbb{P}\{X_F(\bs_1)>x,\ X_F(\bs_2)>x\}
= \mathbb{E}\!\left[
  \bar F_{\epsilon_F}\Bigl(\frac{x}{Y(\bs_1)}\Bigr)
  \bar F_{\epsilon_F}\Bigl(\frac{x}{Y(\bs_2)}\Bigr)
\right].
\end{equation*}

Divide by $\bar F_{\epsilon_F}(x)^2$:
\begin{equation*}
\frac{\mathbb{P}\{X_F(\bs_1)>x,\ X_F(\bs_2)>x\}}{\bar F_{\epsilon_F}(x)^2}
= \mathbb{E}\!\left[
  \frac{\bar F_{\epsilon_F}(x/Y(\bs_1))}{\bar F_{\epsilon_F}(x)}
  \cdot
  \frac{\bar F_{\epsilon_F}(x/Y(\bs_2))}{\bar F_{\epsilon_F}(x)}
\right].
\end{equation*}

For each fixed $(y_1,y_2)$ with $y_1,y_2>0$, \eqref{eqn:tail_cond} implies
\begin{equation*}
\frac{\bar F_{\epsilon_F}(x/y_1)}{\bar F_{\epsilon_F}(x)} \to y_1^{\alpha_0},
\quad
\frac{\bar F_{\epsilon_F}(x/y_2)}{\bar F_{\epsilon_F}(x)} \to y_2^{\alpha_0},
\end{equation*}
so pointwise,
\begin{equation*}
\frac{\bar F_{\epsilon_F}(x/Y(\bs_1))}{\bar F_{\epsilon_F}(x)}
  \cdot
\frac{\bar F_{\epsilon_F}(x/Y(\bs_2))}{\bar F_{\epsilon_F}(x)}
\rightarrow
Y(\bs_1)^{\alpha_0}Y(\bs_2)^{\alpha_0},
\qquad \text{as }x\to\infty.
\end{equation*}

To justify dominated convergence, apply the Potter bound \eqref{eq:potter-tail-ratio} twice with some $\delta>0$. For large $x$,
\begin{equation*}
\frac{\bar F_{\epsilon_F}(x/Y(\bs_i))}{\bar F_{\epsilon_F}(x)}
\le C'_\delta\bigl(1+Y(\bs_i)^{\alpha_0+\delta}\bigr),
\quad i=1,2,
\end{equation*}
so their product is bounded by
\begin{equation*}
C_\delta''\bigl(1+Y(\bs_1)^{\alpha_0+\delta}\bigr)\bigl(1+Y(\bs_2)^{\alpha_0+\delta}\bigr)
\le C_\delta''\bigl(1+Y(\bs_1)^{\alpha_0+\delta}Y(\bs_2)^{\alpha_0+\delta}\bigr).
\end{equation*}
By assumption, $\mathbb{E}\{Y(\bs_1)^{\alpha_0}Y(\bs_2)^{\alpha_0}\}<\infty$ and
$\mathbb{E}\{Y(\bs)^{\alpha_0+\eta}\}<\infty$ for each $\bs$. Choose $\delta\in(0,\eta]$; then Hölder's inequality implies
\begin{equation*}
\mathbb{E}\{Y(\bs_1)^{\alpha_0+\delta}Y(\bs_2)^{\alpha_0+\delta}\}<\infty,
\end{equation*}
so the bound is integrable. Hence, by dominated convergence,
\begin{equation*}
\frac{\mathbb{P}\{X_F(\bs_1)>x,\ X_F(\bs_2)>x\}}{\bar F_{\epsilon_F}(x)^2}
\rightarrow \mathbb{E}\{Y(\bs_1)^{\alpha_0}Y(\bs_2)^{\alpha_0}\},\qquad \text{as }x\to\infty.
\end{equation*}
Since $\bar F_{\epsilon_F}(x)\sim c_F x^{-\alpha_0}$, we obtain
\begin{equation*}
\mathbb{P}\{X_F(\bs_1)>x,\ X_F(\bs_2)>x\}
\sim c_F^{\,2}\,\mathbb{E}\{Y(\bs_1)^{\alpha_0}Y(\bs_2)^{\alpha_0}\}\,x^{-2\alpha_0},\qquad \text{as }x\to\infty.
\end{equation*}

Repeating the same argument for $X_L(\bs_i)=\epsilon_L(\bs_i)Y(\bs_i)$, $i=1,2$, with
$\bar F_{\epsilon_L}(x)\sim c_L x^{-\alpha_0}$, yields
\begin{equation*}
\mathbb{P}\{X_L(\bs_1)>x,\ X_L(\bs_2)>x\}
\sim c_L^{\,2}\,\mathbb{E}\{Y(\bs_1)^{\alpha_0}Y(\bs_2)^{\alpha_0}\}\,x^{-2\alpha_0}.
\end{equation*}
Therefore,
\begin{equation*}
\frac{\mathbb{P}\{X_F(\bs_1)>x,\ X_F(\bs_2)>x\}}
     {\mathbb{P}\{X_L(\bs_1)>x,\ X_L(\bs_2)>x\}}
\rightarrow
\left(\frac{c_F}{c_L}\right)^{2},
\qquad \text{as }x\to\infty,
\end{equation*}
or equivalently,
\begin{equation*}
\mathbb{P}\{X_F(\bs_1)>x,\ X_F(\bs_2)>x\}
\sim
\left(\frac{c_F}{c_L}\right)^{2}
\mathbb{P}\{X_L(\bs_1)>x,\ X_L(\bs_2)>x\},\qquad \text{as }x\to\infty.
\end{equation*}

Combining the marginal and bivariate results establishes the theorem.
\end{proof}

\section{ELBO derivation}\label{App:ELBO}

Fix a time index $t$ and condition vector $\bc_t$. Recall that the decoder first maps
the latent vector $\bz_t$ to the de–noised process
\[
\by_t(\bc_t) = \bW \bz_t \in \mathbb{R}^{n_s},
\]
and then introduces log–Laplace noise, leading to the conditional CDF
\begin{equation}\label{eq:app_decoder_CDF}
p_{\bphi_d}(\bX_t \le \bx_t \mid \bz_t,\bc_t)
= \prod_{j\in\mathcal{J}_t}
    \left\{
        \tfrac{1}{2}\,x_{jt}^{\alpha_0}\,y_{jt}^{-1}
    \right\}
  \cdot
  \prod_{j\notin\mathcal{J}_t}
    \left\{
        1 - \tfrac{1}{2}\,x_{jt}^{-\alpha_0}\,y_{jt}
    \right\},
\end{equation}
where $y_{jt}$ denotes the $j$th element of $\by_t(\bc_t)$ and
\[
\mathcal{J}_t \;=\; \Bigl\{ j\in\{1,\ldots,n_s\} : 0 < x_{jt}/y_{jt} < 1 \Bigr\}.
\]
Differentiating~\eqref{eq:app_decoder_CDF} with respect to $\bx_t$ gives the conditional
density
\begin{equation}\label{eq:app_decoder_PDF}
    p_{\bphi_d}(\bx_t \mid \bz_t,\bc_t)
    = \prod_{j\in\mathcal{J}_t}
        \frac{\alpha_0 x_{jt}^{\alpha_0 - 1}}{2\,y_{jt}^{\alpha_0}}
      \cdot
      \prod_{j\notin\mathcal{J}_t}
        \frac{\alpha_0 x_{jt}^{-\alpha_0 - 1}}{2\,y_{jt}^{-\alpha_0}}.
\end{equation}
Taking logs and simplifying, this can be written compactly as
\begin{equation}\label{eq:log_like_X_given_ZC}
\begin{split}
    \log p_{\bphi_d}(\bx_t \mid \bz_t,\bc_t)
    &= \sum_{j\in\mathcal{J}_t}
        \Bigl(
            \log\alpha_0
          + (\alpha_0 - 1)\log x_{jt}
          - \log 2
          - \alpha_0 \log y_{jt}
        \Bigr) \\
    &\quad
      + \sum_{j\notin\mathcal{J}_t}
        \Bigl(
            \log\alpha_0
          + (-\alpha_0 - 1)\log x_{jt}
          - \log 2
          + \alpha_0 \log y_{jt}
        \Bigr) \\
    &= \sum_{j=1}^{n_s}
        \Bigl\{
            \log \alpha_0
          - \log 2
          - \log x_{jt}
          - \alpha_0
            \Bigl|
               \log\frac{x_{jt}}{y_{jt}}
            \Bigr|
        \Bigr\},
\end{split}
\end{equation}
where the last equality follows from the identity
\[
\Bigl|
  \log\frac{x_{jt}}{y_{jt}}
\Bigr|
=
\begin{cases}
 -\log(x_{jt}/y_{jt}) = \log(y_{jt}/x_{jt}),
   & j\in \mathcal{J}_t,\\[3pt]
 \phantom{-}\log(x_{jt}/y_{jt}),
   & j\notin \mathcal{J}_t.
\end{cases}
\]

\vspace{0.3cm}
\noindent
\textbf{Prior on the latent process.}
Given $\bc_t$, the prior on the latent vector $\bz_t$ is
\begin{equation}\label{eq:app_prior}
    p_{\bphi_d}(\bz_t \mid \bc_t)
    = \prod_{k=1}^{K}
        h\bigl(z_{kt}; \alpha, \theta_{kt}(\bc_t)\bigr),
\end{equation}
where $h(\cdot;\alpha,\theta_{kt}(\bc_t))$ is the exponentially–tilted
positive–stable density. In particular, we may write
\begin{equation}\label{eq:app_ETPS_density}
    h\bigl(z_{kt};\alpha,\theta_{kt}(\bc_t)\bigr)
    = \frac{1}{2}\,\pi^{-1/2}
      \exp\!\Bigl(
         \theta_{kt}(\bc_t)^{1/2}
      \Bigr)
      z_{kt}^{-3/2}
      \exp\!\left\{
          -\theta_{kt}(\bc_t)\,z_{kt}
          - \frac{1}{4z_{kt}}
      \right\},
\end{equation}
so that
\begin{equation}\label{eq:log_prior_Z}
    \log p_{\bphi_d}(\bz_t \mid \bc_t)
    = \sum_{k=1}^{K}
        \log h\bigl(z_{kt};\alpha,\theta_{kt}(\bc_t)\bigr).
\end{equation}

\vspace{0.3cm}
\noindent
\textbf{Encoder / variational posterior.}
The approximate posterior is specified on the log–scale as
\begin{equation}\label{eq:app_encoder_reparam}
\begin{split}
    \log \bz_t
    &= \log \bmu_t + g(\bc_t)
       + \bsigma_t \odot \boldsymbol{\epsilon}_t,
    \qquad \boldsymbol{\epsilon}_t
        \stackrel{\text{ind}}{\sim} \mathrm{MVN}(\boldsymbol{0},\boldsymbol{I}),\\
    (\bmu_t^\top,\bsigma_t^\top)^\top
    &= \mathrm{EncoderNeuralNet}_{\bphi_e}(\bx_t),
\end{split}
\end{equation}
so that, componentwise,
\[
\log z_{kt} \mid \bx_t,\bc_t \sim
\mathcal{N}\bigl(m_{kt},\sigma_{kt}^2\bigr),
\qquad
m_{kt} := \log \mu_{kt} + g_k(\bc_t).
\]
Hence $q_{\bphi_e}(\bz_t\mid\bx_t,\bc_t)$ is a product of log–normal
densities, and up to an additive constant,
\begin{equation}\label{eq:log_q_Z_given_XC}
\begin{split}
    \log q_{\bphi_e}(\bz_t\mid\bx_t,\bc_t)
    &= \sum_{k=1}^{K}
       \log \Bigl\{
          \mathrm{Lognormal}\bigl(
             z_{kt}; m_{kt}, \sigma_{kt}^2
          \bigr)
       \Bigr\} \\
    &= -\sum_{k=1}^{K}
        \Biggl[
            \log z_{kt}
          + \log \sigma_{kt}
          + \frac{
              \bigl(\log z_{kt} - m_{kt}\bigr)^2
            }{2\sigma_{kt}^2}
        \Biggr]
        \;+\; \text{const}.
\end{split}
\end{equation}
Using the reparameterization
\[
\log z_{kt} = m_{kt} + \sigma_{kt}\,\epsilon_{kt},
\qquad
\epsilon_{kt}\sim N(0,1),
\]
one can equivalently express $-\log q_{\bphi_e}$ in terms of
$(\bsigma_t,\boldsymbol{\epsilon}_t)$ as
\begin{equation}\label{eq:minus_log_q_eps_form}
    -\log q_{\bphi_e}(\bz_t\mid\bx_t,\bc_t)
    = \sum_{k=1}^{K}
        \Bigl(
            \log \sigma_{kt}
          + \tfrac{1}{2}\,\epsilon_{kt}^2
        \Bigr)
      \;+\; \text{const},
\end{equation}
which is the form used in Monte Carlo estimation of the ELBO.

\vspace{0.3cm}
\noindent
\textbf{Per–time–step conditional ELBO.}
For fixed $t$ and condition $\bc_t$, the ELBO is
\begin{equation}\label{eq:app_ELBO_def}
\begin{split}
    \mathcal{L}_{\bphi_e,\bphi_d}(\bx_t \mid \bc_t)
    &= \mathbb{E}_{q_{\bphi_e}(\bz_t\mid\bx_t,\bc_t)}
       \Bigl[
           \log p_{\bphi_d}(\bx_t \mid \bz_t,\bc_t)
         + \log p_{\bphi_d}(\bz_t \mid \bc_t)
         - \log q_{\bphi_e}(\bz_t\mid\bx_t,\bc_t)
       \Bigr] \\
    &= \mathbb{E}_{q_{\bphi_e}(\bz_t\mid\bx_t,\bc_t)}
       \Biggl[
           \sum_{j=1}^{n_s}
             \Bigl\{
                \log \alpha_0
              - \log 2
              - \log x_{jt}
              - \alpha_0\Bigl|
                  \log\frac{x_{jt}}{y_{jt}}
                \Bigr|
             \Bigr\} \\
    &\qquad\qquad\qquad\quad
         + \sum_{k=1}^{K}
             \log h\bigl(z_{kt};\alpha,\theta_{kt}(\bc_t)\bigr)
         - \log q_{\bphi_e}(\bz_t\mid\bx_t,\bc_t)
       \Biggr],
\end{split}
\end{equation}
where $y_{jt}$ is the $j$th component of $\by_t(\bc_t) = \boldsymbol{W}\bz_t$.
In practice, the expectation in~\eqref{eq:app_ELBO_def} is approximated
via Monte Carlo using the reparameterization~\eqref{eq:app_encoder_reparam},
and the temporal smoothness penalty in \eqref{eqn:cXVAE_ELBO_1} is then
subtracted to obtain the final objective
$\mathcal{L}^{\star}_{\bphi_e,\bphi_d}(\bx_t\mid\bc_t)$ used in training.

\setcounter{figure}{0} 
\renewcommand{\thefigure}{D.\arabic{figure}} 


\section{FWI Dataset}\label{App:FWI}
The FWI dataset provides daily Fire Weather Index values for inland regions globally. Because our goal is not to develop or evaluate imputation strategies, we restrict our analysis to a spatial domain with complete observational coverage. This ensures that missing data do not influence the modeling pipeline.

\paragraph{Remove Seasonality:} To clean and de-seasonalize the dataset, we follow a procedure similar to that described in the Appendix of \cite{zhang2023flexible}. Let 
\begin{align*}
    \bX(\bs_j) = (X_1(\bs_j),\ldots,X_{N}(\bs_j))^\top
\end{align*}
denote the daily FWI observations at location $\bs_j$, where $N=3,867$ corresponds to all days from May 1, 2014 through November 30, 2024, and $j=1,\ldots,n_s$.

Seasonality is removed separately at each location. For a given site $\bs_j$, we regress $\bX(\bs_j)$ on a set of cubic spline basis function in time. To ensure the spatial continuity, the regression is not fit using the data from whole region; instead, we pool information from a local neighborhood. Specifically, for each location $\bs_j$ we define the neighboring set
\begin{align*}
    \mathcal{S}_j = \{ \bs_i:||\bs_i - \bs_j||<r, i=1,\ldots,n_s \}
\end{align*}
where $r=60$ km, note the $j$th location is also in the neighboring set. Let us denote the number of neighbors in $\mathcal{S}_j$ is $N_j$. Therefore, $\bX(\mathcal{S}_j)$ of dimension $N_j \cdot N \times1 $ is the response for the regression.

Second, we construct the matrix $\boldsymbol{M}=(\mathbf{1}_{N}, \boldsymbol{t}, \mathbf{B})$, where $\mathbf{1}_N$ is the column vector of 1s of length $N$ for the intercept term, $\boldsymbol{t}=(1,\ldots,N)^\top$ is used to fit linear time trend. The columns of $\mathbf{B}$ are 12 cyclic cubic spline basis functions, one representing each month of the year, designed to mimic the smooth monthly cycle observed in the FWI series. These splines are defined over the day index (1-365) with knots placed at evenly spaced quantiles, producing 12 smooth curves that cover everyday of the year (see Figure~\ref{appfig:spline}). This cyclic component also ensures the continuity and smoothness at the boundaries between day 365 and day 1. Then, we vertically stack the matrix $\boldsymbol{M}$ for $N_j$ times to build the design matrix $\boldsymbol{M}_j$. After regressing $\bX(\mathcal{S}_j)$ on $\boldsymbol{M}_j$, we have the fitted values $\hat{\bX}(\mathcal{S}_j)$.

\setcounter{figure}{0} 
\renewcommand{\thefigure}{F.\arabic{figure}} 

\begin{figure}
    \centering
    \includegraphics[width=0.8\linewidth]{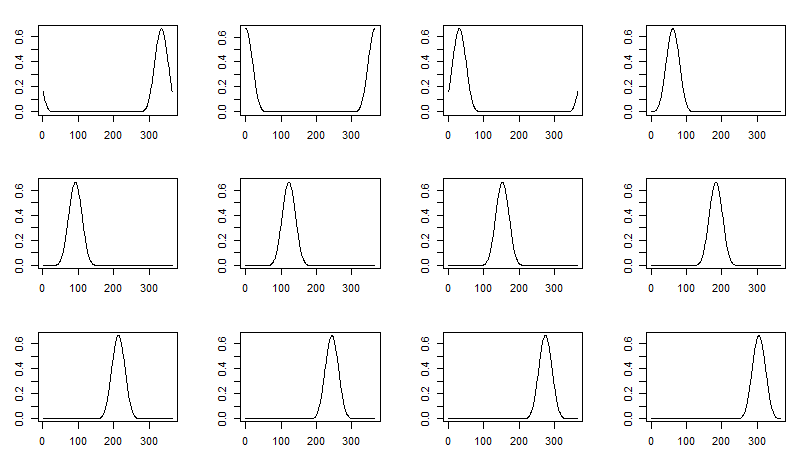}
    \caption{Specially designed 12 cyclic cubic spline basis functions is used for capture repeated seasonal patterns.}
    \label{appfig:spline}
\end{figure}

To model the residuals $\bX(\mathcal{S}_j) - \hat{\bX}(\mathcal{S}_j)$, we use the first two columns of $\boldsymbol{M}_j$ (denote as $\boldsymbol{M}^{\sigma}_j$). With assuming it follows multivariate normal distribution with zero mean vector and variance matrix $\operatorname{diag}(\boldsymbol{\epsilon}_j^2)$. Specifically,
\begin{align*}
    \bX(\mathcal{S}_j) - \hat{\bX}(\mathcal{S}_j) &\sim N(\mathbf{0}, \operatorname{diag}(\boldsymbol{\epsilon}_j^2)), \\
    \log \boldsymbol{\epsilon}_j &= \boldsymbol{M}^{\sigma}_j \times (\beta_1, \beta_2)^\top.
\end{align*}

Then we can estimate parameters $(\beta_1, \beta_2)^\top$ via optimizing the multivariate normal density function:
\begin{align*}
    (\hat{\beta}_1, \hat{\beta}_2)^\top = \argmin_{(\beta_1, \beta_2)^\top} \left\{ -\frac{1}{2} \log \mathbf{1}^\top \boldsymbol{\epsilon}_j^2 -\frac{1}{2} (\bX(\mathcal{S}_j) - \hat{\bX}(\mathcal{S}_j))^\top \operatorname{diag}(\boldsymbol{\epsilon}_j^{-2}) (\bX(\mathcal{S}_j) - \hat{\bX}(\mathcal{S}_j)) \right\}.
\end{align*}

With the estimated parameters $(\hat{\beta}_1, \hat{\beta}_2)^\top$ in hand, the corresponding estimated standard deviations are given by
\begin{align*}
    \hat{\boldsymbol{\epsilon}}_j=\exp\{ \boldsymbol{M}^{\sigma}_j \times (\hat{\beta}_1, \hat{\beta}_2)^\top \}.
\end{align*}
The vector $\hat{\boldsymbol{\epsilon}}_j$ contains the estimated standard deviations for all neighboring sites of location $\bs_j$, stacked vertically, and has dimension $N_j \cdot N \times 1$. From this vector, we extract the entry corresponding to site $\bs_j$ itself, denoted $\hat{\boldsymbol{e}}_j$. Finally, the daily records at location $\bs_j$ is de-trended by standardizing the residuals as
\begin{align}\label{eqn:detrend}
    \bX^*(\bs_j) = \frac{\bX(\bs_j) - \hat{\bX}(\bs_j)}{\hat{\boldsymbol{e}}_j}.
    \tag{F.1}
\end{align}
This procedure is repeated for all locations in the target region.

\paragraph{Marginal distributions of the monthly maxima:} After removing seasonality using the normalization in~\ref{eqn:detrend}, we extract the monthly maxima from $\bX^*(\bs_j)$ at site $\bs_j$ and denote them as $\bm_j=(m_{j1},\ldots,m_{jn_t})$, where $n_t=127$ is the number of months from May 1, 2014 through November 30, 2024. Before applying our model, we require an appropriate marginal distribution for these monthly maxima so that they can be transformed to a Pareto-type scale. We consider two candidates: the generalized extreme value (GEV) distribution and the general non-central $t$ distribution. To compare them, we employ a $\chi^2$ goodness-of-fit tests, which offers flexibility in specifying both the number of bins and the degrees of freedom.

The $\chi^2$ goodness-of-fit test at site $\bs_j$ proceeds as follows:
\begin{enumerate}
    \item \textbf{Define bins:} Construct $n_I+1$ equally spaced cut points spanning the range of the monthly maxima at $\bs_j$, and then get $n_I$ intervals.
    \item \textbf{Observed frequencies:} Count the number of maxima falling into each interval, denoted $O_i$, for $i=1,\ldots,n_I$.
    \item \textbf{Fit candidate models:} Fit both GEV and $t$ distributions to the monthly maxima and obtain parameter estimates.
    \item \textbf{Expected frequencies:} For each model, compute the expected frequency
    \begin{align*}
        E_i = n_t p_i,
    \end{align*}
    where $p_i$ is the probability increment of the fitted distribution in each interval.
\end{enumerate}

Viewing the monthly maxima as a multinomial sample with $n_t$ trials and $n_I$ categories, the generalized likelihood-ratio statistic for testing
\begin{align*}
    H_0:(p_1, \ldots, p_{n_I})^\top \text{are the true event probabilities}
\end{align*}
is given by
\begin{align*}
    \sum_{i=1}^{n_I} O_i \log(O_i/E_i) \xrightarrow{d} \chi_{d}^2 \quad \text{as} \quad n_t \rightarrow \infty,
\end{align*}
where $d = n_I-4$ for the GEV model (three parameters: location, scale and shape) and $d = n_I-3$ for the $t$ model (two parameters: non-centrality parameter and degrees of freedom). 

\begin{figure}
    \centering
    \includegraphics[width=0.85\linewidth]{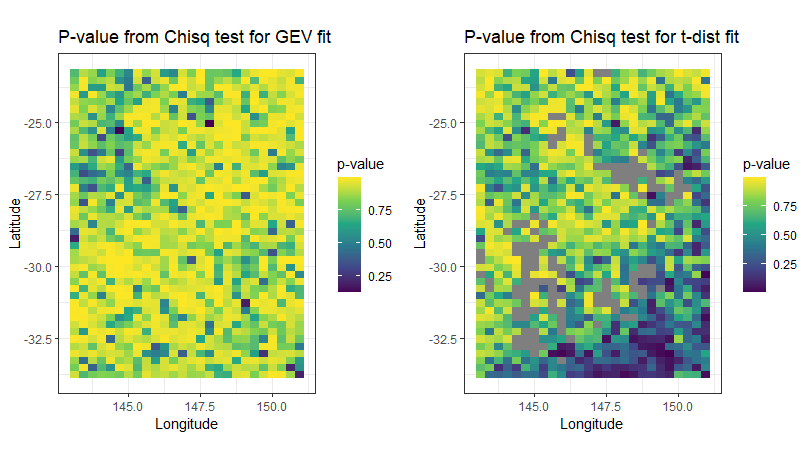}
    \caption{Heatmaps of $p$-values from $\chi^2$ goodness-of-fit tests. Left for GEV model and right for $t$ model.}
    \label{fig:app_pval}
\end{figure}

Applying this procedure at all locations yields the $p$-values heatmap in Figure~\ref{fig:app_pval}. For the GEV model (left), all locations produce $p$-values greater than 0.05, indicating an excellent fit across the study region. For the non-central $t$ model (right panel), missing $p$-values correspond to failures in parameter estimation, which prevent the computation of the $\chi^2$ statistic. Additionally, several sites in northern New South Wales fail the goodness-of-fit tests, with $p$-values below 0.05. 

Overall, the diagnostics from the $\chi^2$ goodness-of-fit test demonstrate that the GEV distribution provides a more reliable and robust marginal model for the monthly maxima in this domain.

\paragraph{Marginal transformation:} Once the goodness-of-fit tests are completed, we obtain site-specific GEV parameter estimates $\hat{\mu}_j, \hat{\sigma}_j, \hat{\xi}_j$ for all locations. Before applying our model to the monthly maxima, a monotonic transformation is required to map the data to a Pareto-type scale.

For each site $j=1,\ldots, n_s$, define the upper bound of the GEV distribution as
\begin{align*}
    \beta_{j} = \mu_{j} - \sigma_{j}/\xi_{j},
\end{align*}
The transformed value of the monthly maxima $m_{jt}$ is then given by
\begin{align*}
    x_{jt} = \left\{ \frac{(m_{jt} - \beta_j)\cdot \xi_j}{\sigma_j} \right\} ^{1/\xi_j},
\end{align*}
if $\xi_j > 0$ and 
\begin{align*}
    x_{jt} = \left\{ \frac{\sigma_j}{(\beta_j - m_{jt})\cdot |\xi_j|} \right\} ^{1/|\xi_j|},
\end{align*}
if $\xi_j < 0$. Collecting the transformed values across space, we obtain
\begin{align*}
    \bx_t = (x_{1t}, x_{2t}, \ldots,x_{n_st})^\top, \quad t=1,\ldots,n_t,
\end{align*}
which serve as the model input for the cXVAE.

\end{document}